\newcommand{\ux}{\underline{x}}
\newcommand{\ox}{\overline{x}}
\newcommand{\bom}{\bm{\omega}}
\renewcommand{\epsilon}{\varepsilon}
\renewcommand{\tilde}{\widetilde}
\newcommand{\eqdef}{\stackrel{def}{=}}
\def\:#1{\protect \ifmmode {\mathbf{#1}} \else {\textbf{#1}} \fi}
\newcommand{\ba}{\mathbf{a}}
\newcommand{\bu}{\mathbf{u}}
\newcommand{\bz}{\mathbf{z}}
\newcommand{\cA}{\mathcal{A}}
\newcommand{\cC}{\mathcal{C}}
\newcommand{\cD}{\mathcal{D}}
\newcommand{\cH}{\mathcal{H}}
\newcommand{\cL}{\mathcal{L}}
\newcommand{\cO}{\mathcal{O}}
\newcommand{\cP}{\mathcal{P}}
\newcommand{\cT}{\mathcal{T}}
\renewcommand{\epsilon}{\varepsilon}
\newcommand{\dd}{\mathrm{d}}
\DeclareMathOperator*{\argmax}{arg\,max}
\newcommand{\transp}{\mathsf{\scriptscriptstyle T}}
\DeclareMathOperator{\Tr}{Tr}
\DeclarePairedDelimiterX{\inp}[2]{\langle}{\rangle}{#1, #2}
\newcommand{\probability}[1]{\mathbb{P}\left(#1\right)}
\DeclareMathOperator*{\expectedvalue}{\mathbb{E}}
\newcommand{\condbar}{\;\middle|\;}
\newcommand{\Real}{\mathbb{R}}
\newcommand{\Natural}{\mathbb{N}}
\newtheorem{theorem}{Theorem}
\newtheorem{definition}{Definition}
\newtheorem{corollary}{Corollary}
\newtheorem{lemma}{Lemma}
\newtheorem{proposition}{Proposition}
\newtheorem{assumption}{Assumption}
\definecolor{darkspringgreen}{rgb}{0.09, 0.45, 0.27}
\definecolor{cadmiumgreen}{rgb}{0.0, 0.42, 0.24}
\newcommand{\hlr}[1]{\textcolor{Red}{#1}}
\newcommand{\hlb}[1]{\textcolor{Blue}{#1}}
\newcommand{\hlg}[1]{\textcolor{cadmiumgreen}{#1}}
\title{Robust-Adaptive Control of Linear Systems:\\ beyond Quadratic Costs}
\author{%
	Edouard~Leurent \\
	Univ. Lille, Inria, CNRS, Centrale Lille, UMR 9189 – CRIStAL, Renault\\
	F-59000 Lille, France\\
	\texttt{edouard.leurent@inria.fr} \\
	 \And
	 Denis~Efimov \\
	 Univ. Lille, Inria, CNRS, Centrale Lille, UMR 9189 – CRIStAL\\
	 F-59000 Lille, France\\
	 \texttt{denis.efimov@inria.fr} \\
	 \And
	 Odalric-Ambrym~Maillard \\
	 Univ. Lille, Inria, CNRS, Centrale Lille, UMR 9189 – CRIStAL\\
	 F-59000 Lille, France\\
	 \texttt{odalric.maillard@inria.fr} \\
}
\begin{document}
	
\maketitle

\begin{abstract}
We consider the problem of robust and adaptive model predictive control (MPC) of a linear system, with unknown parameters that are learned along the way (adaptive), in a critical setting where failures must be prevented (robust). This problem has been studied from different perspectives by different communities. However, the existing theory deals only with the case of quadratic costs (the LQ problem), which limits applications to stabilisation and tracking tasks only. In order to handle more general (non-convex) costs that naturally arise in many practical problems, we carefully select and bring together several tools from different communities, namely non-asymptotic linear regression, recent results in interval prediction, and tree-based planning. Combining and adapting the theoretical guarantees at each layer is non trivial, and we provide the first end-to-end suboptimality analysis for this setting. Interestingly, our analysis naturally adapts to handle many models and combines with a data-driven robust model selection strategy, which enables to relax the modelling assumptions. Last, we strive to preserve tractability at any stage of the method, that we illustrate on two challenging simulated environments.\footnote{Code and videos available at  \url{https://eleurent.github.io/robust-beyond-quadratic/}.}
\end{abstract}

\section{Introduction}

Despite the recent successes of Reinforcement Learning \citep[e.g.][]{mnih2015humanlevel,Silver1140}, it has hardly been applied in real industrial issues. This could be attributed to two undesirable properties which limit its practical applications. First, it depends on a tremendous amount of interaction data that cannot always be simulated. This issue can be alleviated by model-based methods -- which we consider in this work -- that often benefit from better sample efficiencies than their model-free counterparts. Second, it relies on trial-and-error and random exploration. In order to overcome these shortages, and motivated by the path planning problem for a self-driving car, in this paper we consider the problem of controlling an unknown linear system $x(t)$ so as to maximise an \emph{arbitrary} bounded reward function $R$, in a critical setting where mistakes are costly and must be avoided at all times. 
This choice of rich reward space is crucial to have sufficient flexibility to model non-convex and non-smooth functions that naturally arise in many practical problems involving combinatorial optimisation, branching decisions, etc., while quadratic costs are mostly suited for tracking a fixed reference trajectory \citep[e.g.][]{Kumar2013}.
Since experiencing failures is out of question, the only way to prevent them from the outset is to rely on some sort of prior knowledge. In this work, we assume that the system dynamics are partially known, in the form of a linear differential equation with unknown parameters and inputs. More precisely, we consider a linear system with state $x\in\Real^p$, acted on by controls $u\in\Real^q$ and disturbances $\omega\in\Real^r$, and following dynamics in the form:
\begin{equation}
\label{eq:dynamics}
\dot{x}(t)=A(\theta)x(t) + B u(t) + D \omega(t),\;t\geq0,
\end{equation}
where the parameter vector $\theta$ in the state matrix $A(\theta)\in\Real^{p\times p}$ belongs to a compact set $\Theta \subset \Real^d$. The control matrix $B\in\Real^{p\times q}$ and disturbance matrix $D\in\Real^{p\times r}$ are known. We also assume having access to the observation of $x(t)$ and to a noisy measurement of $\dot{x}(t)$ in the form $y(t)=\dot{x}(t) + C\nu(t)$, where $\nu(t)\in\Real^s$ is a measurement noise and $C\in\Real^{p\times s}$ is known. Assumptions over the disturbance $\omega$ and noise $\nu$ will be detailed further, and we denote $\eta(t) = C\nu(t) + D\omega(t)$. We argue that this structure assumption is realistic given that most industrial applications to date have been relying on physical models to describe their processes and well-engineered controllers to operate them, rather than machine learning. Our framework relaxes this modelling effort by allowing some \emph{structured uncertainty} around the nominal model. We adopt a data-driven scheme to estimate the parameters more accurately as we interact with the true system. Many model-based reinforcement learning algorithms rely on the estimated dynamics to derive the corresponding optimal controls \citep[e.g.][]{Lenz2015,Levine2015}, but suffer from \emph{model bias}: they ignore the error between the learned and true dynamics, which can dramatically degrade control performances \citep{Schneider1997}. %It is particularly likely when maximising an objective under constraints, which naturally pushes the systems to operate in the vicinity of constraint saturation and makes it prone to failure.

To address this issue, we turn to the framework of \emph{robust} decision-making: instead of merely considering a point estimate of the dynamics, for any $N\in\Natural$, we build an entire \emph{confidence region} $\cC_{N,\delta}\subset\Theta$, illustrated in \Cref{fig:estimation}, that contains the true dynamics parameter with high probability:
\begin{align}
\probability{\theta\in \cC_{N,\delta}} \geq 1-\delta,
\label{eq:confidence}
\end{align}
where $\delta\in(0,1)$. In \Cref{sec:estimation}, having observed a history $\cD_{N} = \{(x_n, y_n,u_n)\}_{n\in\left[N\right]}$ of transitions, our first contribution extends the work of \citet{Abbasi2011} who provide a confidence ellipsoid for the least-square estimator to our setting of feature matrices, rather than feature vectors. 

The \emph{robust control objective} $V^r$ \citep{Bental2009,Bertsimas2011,Gorissen2015} aims to maximise the worst-case outcome with respect to this confidence region $\cC_{N,\delta}$:
\begin{equation}
\label{eq:robust-control}
\sup_{\bu\in{(\Real^q)}^\Natural} {V^r(\bu)}, \qquad \text{ where }\qquad {V^r(\bu)} \eqdef \inf_{\substack{\theta \in \cC_{N,\delta} \\ \omega\in[\underline{\omega},\overline{\omega}]^\Real}} \left[\sum_{n=N+1}^\infty \gamma^n R(x_n(\bu,\omega))\right],
\end{equation}
$\gamma\in(0,1)$ is a discount factor, and $x_n(\bu,\bom)$ is the state reached at step $n$ under controls $\bu$ and disturbances $\omega(t)$ within the given admissible bounds $[\underline\omega(t),\overline\omega(t)]$. Maximin problems such as $\eqref{eq:robust-control}$ are notoriously hard if the reward $R$ has a simple form. However, without a restriction on the shape of functions $R$, we cannot hope to derive an explicit solution.
In our second contribution, we propose a robust MPC algorithm for solving \eqref{eq:robust-control} numerically. In \Cref{sec:prediction}, we leverage recent results from the uncertain system simulation literature to derive an \emph{interval predictor} $[\underline{x}(t),\overline{x}(t)]$ for the system \eqref{eq:dynamics}, illustrated in \Cref{fig:prediction}. For any $N\in\Natural$, this predictor takes the information on the current state ${x}_N$, the confidence region $\cC_{N,\delta}$, planned control sequence $\bu$ and admissible disturbance bounds $[\underline{\omega}(t),\overline{\omega}(t)]$; and must verify the \emph{inclusion property}:
\begin{equation}
\label{eq:inclusion-property}
\underline{x}(t)\leq x(t)\leq\overline{x}(t), \forall t\geq t_N.
\end{equation}

Since $R$ is generic, potentially non-smooth and non-convex, solving the optimal -- not to mention the robust -- control objective is intractable. In \Cref{sec:control}, facing a sequential decision problem with continuous states, we turn to the literature of tree-based planning algorithms. Although there exist works addressing continuous actions \citep{Busoniu2018,Weinstein2012}, we resort to a first approximation and discretise the continuous decision $(\Real^q)^\Natural$ space by adopting a hierarchical control architecture: at each time, the agent can select a high-level \emph{action} $a$ from a finite space $\cA$. Each action $a\in\cA$ corresponds to the selection of a low-level controller $\pi_a$, that we take affine: $u(t) = \pi_a(x(t)) \eqdef -K_a x(t) + u_a.$ For instance, a tracking a subgoal $x_g$ can be achieved with $\pi_g = K(x_g - x)$. This discretisation induces a suboptimality, but it can be mitigated by diversifying the controller basis.
The robust objective \eqref{eq:robust-control} becomes $\sup_{\ba\in{\cA}^\Natural} V^r(\ba)$, where $x_n(\ba, \omega)$ stems from \eqref{eq:dynamics} with $u_n = \pi_{a_n}(x_n)$.
However, tree-based planning algorithms are designed for a single known generative model rather than a confidence region for the system dynamics. Our third contribution adapts them to the robust objective \eqref{eq:robust-control} by approximating it with a tractable surrogate $\hat{V}^r$ that exploits the interval predictions \eqref{eq:inclusion-property} to define a pessimistic reward. In our main result, we show that the best surrogate performance achieved during planning is guaranteed to be attained on the true system, and provide an upper bound for the approximation gap and suboptimality of our framework in \Cref{thm:minimax-regret-bound}. This is the first result of this kind for maximin control with generic costs to the best of our knowledge. \Cref{alg:full} shows the full integration of the three procedures of estimation, prediction and control. 

In \Cref{sec:multi-model}, our forth contribution extends the proposed framework to consider multiple modelling assumptions, while narrowing uncertainty through data-driven model rejection, and still ensuring safety via robust model-selection during planning.

Finally, in \Cref{sec:experiments} we demonstrate the applicability of \Cref{alg:full} in two numerical experiments: a simple illustrative example and a more challenging simulation for safe autonomous driving.

\paragraph{Notation}

The system dynamics are described in continuous time, but sensing and control happen in discrete time with time-step $\dd t>0$. For any variable $z$, we use subscript to refer to these discrete times: $z_n = z(t_n)$ with $t_n = n\dd t$ and $n\in\Natural$. We use bold symbols to denote temporal sequences $\bz = (z_n)_{n\in\Natural}$. We denote $z^+ = \max(z,0)$, $z^- = z^+-z$, $|z| = z^++z^-$ and $[n]=\{1,\dots, n\}$.

\begin{figure}
	\begin{minipage}[b]{0.49\linewidth}
		\centering
		\includegraphics[trim={1cm 0 0 0}, clip, width=0.6\linewidth]{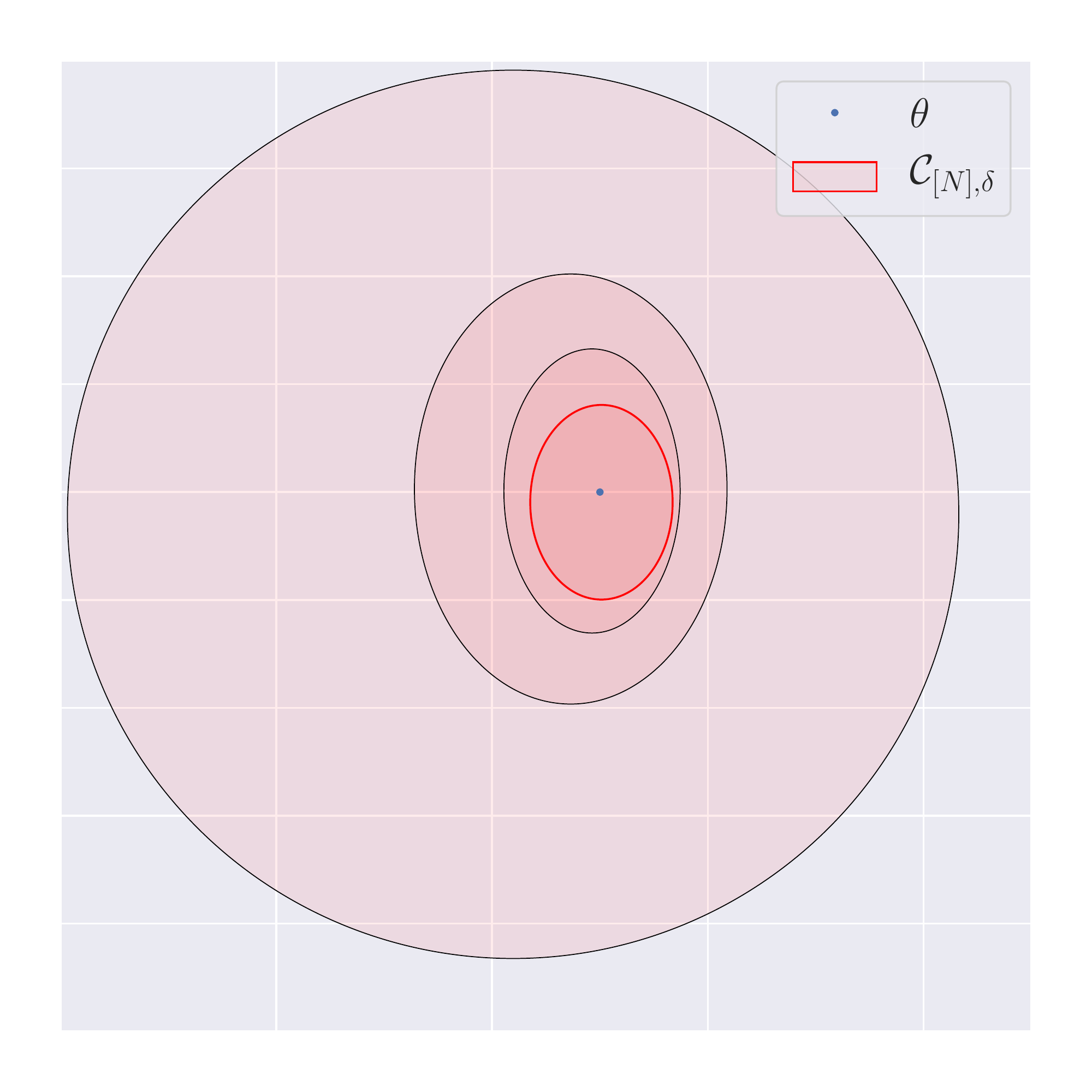}
		\caption{The model estimation procedure, running on the obstacle avoidance problem of \Cref{sec:experiments}. The confidence region $C_{N,\delta}$ shrinks with the number of samples $N$.}
		\label{fig:estimation}
	\end{minipage}
	\hfill
	\begin{minipage}[b]{0.49\linewidth}
		\centering
		\includegraphics[trim={4cm 0 0 0}, clip, width=0.8\linewidth]{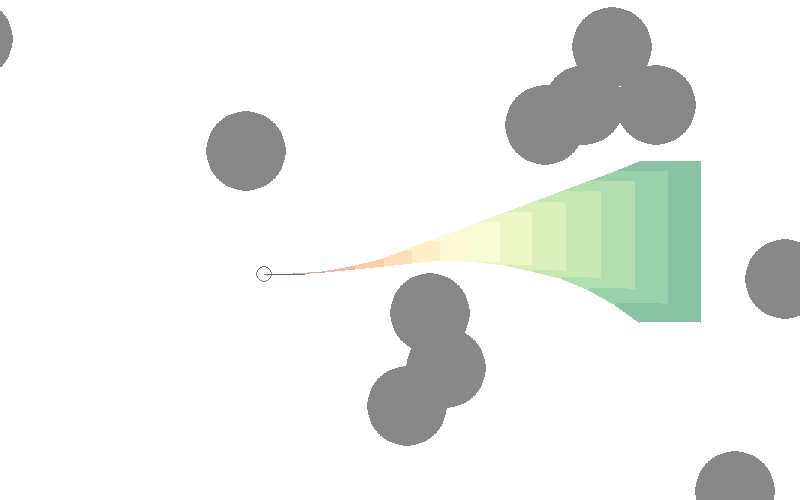}
		\caption{The state prediction procedure running on the obstacle avoidance problem of \Cref{sec:experiments}. At each time step (red to green), we bound the set of reachable states under model uncertainty \eqref{eq:confidence}}
		\label{fig:prediction}
	\end{minipage}%
\end{figure}

\begin{algorithm}[tb]
	\caption{Robust Estimation, Prediction and Control}
	\label{alg:full}
	\begin{algorithmic}
		\STATE {\bfseries Input:} confidence level $\delta$, structure $(A,\phi)$, reward $R$, $\cD_{[0]}\gets\emptyset,\, \ba_1\gets\emptyset$
		\FOR{$N = 0,1,2,\dots$} 
		\STATE $\cC_{N,\delta} \gets$\textsc{Model Estimation}$(\cD_{N})$. \eqref{eq:polytope}
		\FOR{each planning step $k\in\{N,\dots,N+K\}=N+[K]$}
		\STATE $[\underline{x}_{k+1}, \overline{x}_{k+1}]\gets$ \textsc{Interval Prediction}($\cC_{N,\delta}, \ba_kb$) for each action $b\in \cA$. \eqref{eq:interval-predictor}

		\STATE $\ba_{k+1}$ $\gets$\textsc{Pessimistic Planning}$(\underline{R_{k+1}}([\underline{x}_{k+1}, \overline{x}_{k+1}]))$.  \eqref{eq:opd}
		\ENDFOR
		\STATE Execute the recommended control $u_{N+1}$, and add the transition $(x_{N+1}, y_{N+1}, u_{N+1})$ to $\cD_{[N+1]}$.
		\ENDFOR
	\end{algorithmic}
\end{algorithm}

\subsection{Related Work}

The control of uncertain systems is a long-standing problem, to which a vast body of literature is dedicated. Existing work is mostly concerned with the problem of \emph{stabilisation} around a fixed reference state or trajectory, including approaches such as $\cH_\infty$ control \citep[][]{Basar1996}, sliding-mode control \citep{Lu1997} or system-level synthesis \citep{Dean2017,Dean2018}. This paper fits in the popular MPC framework, for which adaptive data-driven schemes have been developed to deal with model uncertainty \citep{Sastry1990,Tanaskovic2014,Amos2018}, but lack guarantees. The family of tube-MPC algorithms seeks to derive theoretical guarantees of \emph{robust constraint satisfaction}: the state $x$ is constrained in a safe region $\mathbb{X}$ around the origin, often chosen convex \citep{Fukushima2007,Adetola2009,Aswani2013,Turchetta2016,Lorenzen2017,Kohler2019,Lu2019,Leurent2020robust}.
Yet, many tasks cannot be framed as stabilisation problems (e.g. obstacle avoidance) and are better addressed with the minimax control objective, which allows more flexible goal formulations. Minimax control has mostly been studied in two particular instances.

\paragraph{Finite states} Minimax control of finite Markov Decision Processes with uncertain parameters was studied in \citep{Iyengar2005,Nilim2005,Wiesemann2013}, who showed that the main results of Dynamic Programming can be extended to their robust counterparts only when the dynamics ambiguity set verifies a certain rectangularity property. Since we consider continuous states, these methods do not apply.

\paragraph{Linear dynamics and quadratic costs} Several approaches have been proposed for cumulative regret minimisation in the LQ problem. In the \emph{Optimism in the Face of Uncertainty} paradigm, the best possible dynamics within a high-confidence region is selected under a controllability constraint, to compute the corresponding optimal control in closed-form by solving a Riccati equation. The results of 	\citep{abbasi-yadkori11a,Ibrahimi2013,Faradonbeh2017} show that this procedure achieves a $\tilde{\cO}\left(N^{1/2}\right)$ regret. Posterior sampling algorithms \citep{Ouyang2017,abeille18a} select candidate dynamics randomly instead, and obtain the same result. Other works use noise injection for exploration such as \citep{Dean2017,Dean2018}. However, neither optimism nor random exploration fit a critical setting, where ensuring safety requires instead to consider pessimistic outcomes. The work of \citet{Dean2017} is close to our setting: after an offline estimation phase, they estimate a suboptimality between a minimax controller and the optimal performance. Our work differs in that it addresses a generic shape cost. 
Another work of interest is \citep{Rosolia2019} where worst-case generic costs are considered. However, they assume the knowledge of the dynamics, and their rollout-based solution only produces inner-approximations and does not yield any guarantee. In this paper, interval prediction is used to produce oversets, while a near-optimal control is found using a tree-based planning procedure.

\section{Model Estimation}

\label{sec:estimation}

To derive a confidence region \eqref{eq:confidence} for $\theta$, the functional relationship $A(\theta)$ must be specified.
\begin{assumption}[Structure]
\label{assumpt:structure}
There exists a known feature tensor $\phi\in \Real^{d \times p \times p}$ such that for all $\theta\in\Theta$,
\begin{equation}
    A(\theta) = A + %\theta^\transp \phi \eqdef A + 
    \sum_{i=1}^d \theta_i\phi_i,
\end{equation}
where $A,\phi_1,\dots,\phi_d\in\Real^{p\times p}$ are known. For all $n$, we denote $\Phi_n = [\phi_1 x_n \dots \phi_d x_n]\in\Real^{p\times d}$.
We also assume to know a bound $S$ such that $\theta\in[-S,S]^d$.
\end{assumption}

We slightly abuse notations and include additional known terms in the measurement signal
$
%\begin{equation*}
%\label{eq:measurement}
    y(t) = \dot{x}(t) + C\nu(t) - A x(t) - Bu(t),
%\end{equation*}
$ 
to obtain a linear regression system
$
y_n = \Phi_n\theta + \eta_n.
$

\paragraph{Regularised least square} To derive an estimate on $\theta$, we consider the $L_2$-regularised regression problem with weights $\Sigma_p\in\Real^{p\times p}$ and $\lambda\in\Real^+_*$:\hfill
$$
%    \label{eq:regression_min}
    \min_{\theta\in\Real^d} \sum_{n=1}^N \|y_n -\Phi_n\theta\|_{\Sigma_p^{-1}}^2 + \lambda\|\theta\|_{}^2.
    \refstepcounter{equation}~(\theequation)\label{eq:regression_min}
$$

%The solution can be obtained as:

\begin{proposition}[Regularised solution]
\label{prop:regularized_solution}
The solution to \eqref{eq:regression_min} is
\begin{align}
    \label{eq:vector_rls}
    \theta_{N,\lambda} = G_{N, \lambda}^{-1} \sum_{n=1}^N \Phi_n^\transp \Sigma_p^{-1} y_n,\qquad
%    \label{eq:g_n_lambda}
    \text{where }\quad G_{N, \lambda} = \sum_{n=1}^N \Phi_{n}^\transp\Sigma_p^{-1}\Phi_{n}  + \lambda I_d \in \Real^{d\times d}.
\end{align}
\end{proposition}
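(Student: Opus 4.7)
The plan is to recognize this as a standard strongly convex quadratic program in $\theta$ and to solve it by the first-order optimality condition.

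First I would observe that the objective $F(\theta) = \sum_{n=1}^N \|y_n - \Phi_n\theta\|_{\Sigma_p^{-1}}^2 + \lambda\|\theta\|^2$ is a sum of quadratic forms in $\theta$. Assuming $\Sigma_p$ is positive definite (so that $\Sigma_p^{-1}$ is a valid Mahalanobis weight) and $\lambda > 0$, the Hessian of $F$ is $2G_{N,\lambda} = 2\left(\sum_{n=1}^N \Phi_n^\transp \Sigma_p^{-1}\Phi_n + \lambda I_d\right)$, which is positive definite because $\lambda I_d$ is positive definite and each $\Phi_n^\transp \Sigma_p^{-1}\Phi_n$ is positive semidefinite. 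Hence $F$ is strongly convex with a unique global minimizer, and $G_{N,\lambda}$ is invertible.

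Next I would expand each squared weighted norm as $\|y_n - \Phi_n\theta\|_{\Sigma_p^{-1}}^2 = (y_n - \Phi_n\theta)^\transp \Sigma_p^{-1}(y_n - \Phi_n\theta)$ and compute the gradient by elementary matrix calculus:
\begin{equation*}
\nabla_\theta F(\theta) = -2\sum_{n=1}^N \Phi_n^\transp \Sigma_p^{-1}(y_n - \Phi_n\theta) + 2\lambda\theta.
\end{equation*}
Setting $\nabla_\theta F(\theta_{N,\lambda}) = 0$ and rearranging gives
\begin{equation*}
\Bigl(\sum_{n=1}^N \Phi_n^\transp \Sigma_p^{-1}\Phi_n + \lambda I_d\Bigr)\,\theta_{N,\lambda} = \sum_{n=1}^N \Phi_n^\transp \Sigma_p^{-1} y_n,
\end{equation*}
which is exactly $G_{N,\lambda}\,\theta_{N,\lambda} = \sum_{n=1}^N \Phi_n^\transp \Sigma_p^{-1} y_n$. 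Inverting $G_{N,\lambda}$ yields the claimed closed form.

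There is no real obstacle here: the only subtle point, compared to the usual scalar-regression setting of \citet{Abbasi2011}, is keeping track of the dimensions when $\Phi_n$ is a $p\times d$ matrix rather than a vector, so the cross-term reads $\Phi_n^\transp \Sigma_p^{-1} \Phi_n \in \Real^{d\times d}$ rather than an outer product of vectors. The weighting by $\Sigma_p^{-1}$ simply passes through the calculation unchanged. The main thing to verify cleanly is the invertibility of $G_{N,\lambda}$, which is immediate from $\lambda > 0$ and which makes the solution well-defined for all $N \geq 0$, including $N = 0$ where $\theta_{0,\lambda} = 0$.
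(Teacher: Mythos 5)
Your proof is correct and follows essentially the same route as the paper's: differentiate the regularised least-squares objective, set the gradient to zero, and read off the normal equations $G_{N,\lambda}\theta_{N,\lambda} = \sum_{n=1}^N \Phi_n^\transp \Sigma_p^{-1} y_n$. You are in fact slightly more careful than the paper, which omits the positive-definiteness/invertibility justification for $G_{N,\lambda}$ (and whose displayed normal equation contains a typo, dropping the $\lambda$ in front of $I_d$), so there is nothing to fix.
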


Substituting $y_n$ into \eqref{eq:vector_rls} yields the regression error:
$
    \theta_{N,\lambda} - \theta = G_{N, \lambda}^{-1}\sum_{n=1}^N \Phi_n^\transp \Sigma_p^{-1}\eta_n - \lambda G_{N, \lambda}^{-1}\theta.
$
To bound this error, we need the noise $\eta_n$ to concentrate.
%Depending on the assumption we have over the noise $\eta_n$, we can bound this error in different ways.
%
%\subsection{Bounded noise}
%
%\begin{assumption}[Bounded noise]
%\label{assumpt:bounded-noise}
%The noise $\eta(t)$ is bounded in $\|\cdot\|_\infty$.
%\end{assumption}
%
%In particular, we denote the coefficient-wise bounds over disturbances as $\underline{\omega}(t) \leq \omega(t) \leq \overline{\omega}(t)$. This is a deterministic assumption, then by Hölder inequality a confidence region $\cC_{N,0}$ can be derived based on a $L_1$-ball estimate of $\theta$ at time $N$, i.e. a polytope with $2d$ vertices:
%\begin{equation}
%\label{eq:bounded-noise-polytope}
%\|\theta_{N,\lambda}\! -\! \theta\|_1\! \leq \! \|G_{N, \lambda}^{-1}\!\sum_{n=1}^N \!\Phi_n^\transp\! \Sigma_p^{-1}\!\|_1\|\eta\|_\infty\! +\!\lambda\|G_{N, \lambda}^{-1}\!\|_1S
%\end{equation}
%
%There may be cases where \Cref{assumpt:bounded-noise} does not hold. For instance, when the noise $\eta$ is Gaussian, then $\|\eta\|_\infty=+\infty$ which makes \eqref{eq:bounded-noise-polytope} uninformative. Then, another probabilistic assumption can be made.
%
%\subsection{Sub-Gaussian noise}

\begin{assumption}[Noise Model]
\label{assumpt:gaussian-noise}
We assume that
\begin{enumerate}
	\item at each time $t\geq0$, the combined noise $\eta(t)$ is an independent sub-Gaussian noise with covariance proxy $\Sigma_p \in \Real^{p\times p}$:
	$$
	\forall u\in\Real^p,\; \expectedvalue \left[ \exp{\left( u^\transp \eta(t)\right)}\right] \leq \exp{\left( \frac{1}{2} u^\transp \Sigma_p u\right)};
	$$
	\item at each time $t\geq0$, the disturbance $\omega(t)$ is enclosed by \emph{known} bounds $\underline{\omega}(t) \leq \omega(t) \leq \overline{\omega}(t)$, whose amplitude verifies $\sum_{n=0}^\infty \gamma^n C_\omega(t_n) < \infty$, where $$C_\omega(t) \eqdef \sup_{\tau\in[0,t]} \|\overline{\omega}(\tau) - \underline{\omega}(\tau)\|_2.$$
\end{enumerate}

\end{assumption}

\begin{theorem}[Confidence ellipsoid, a matricial version of \citealt{Abbasi2011}]
\label{thm:confidence_ellipsoid}
Under \Cref{assumpt:gaussian-noise}, it holds with probability at least $1-\delta$ that
\begin{align}
    \label{eq:confidence-ellipsoid}
    \| \theta_{N,\lambda}  - \theta\|_{G_{N,\lambda}} \leq \beta_N(\delta), \quad \text{with}\quad
%	\label{eq:beta_n}
    \beta_N(\delta)\eqdef \sqrt{2\ln \left(\frac{\det(G_{N,\lambda})^{1/2}}{\delta\det(\lambda I_d)^{1/2}}\right)}
     + (\lambda d)^{1/2}S.
\end{align}
\end{theorem}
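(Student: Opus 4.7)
The plan is to mirror the self-normalised-martingale proof of \citet{Abbasi2011}, but replacing the scalar sub-Gaussian noise $\eta_n\in\mathbb{R}$ and feature vector $\phi_n\in\mathbb{R}^d$ by the vector noise $\eta_n\in\mathbb{R}^p$ and feature matrix $\Phi_n\in\mathbb{R}^{p\times d}$, and absorbing the covariance proxy $\Sigma_p$ throughout. Starting from the closed-form expression in \autoref{prop:regularized_solution}, substitute $y_n=\Phi_n\theta+\eta_n$ into \eqref{eq:vector_rls} to get the bias–variance decomposition
\[
\theta_{N,\lambda}-\theta \;=\; G_{N,\lambda}^{-1} S_N \;-\; \lambda\, G_{N,\lambda}^{-1}\theta,\qquad S_N \defeq \sum_{n=1}^N \Phi_n^\transp \Sigma_p^{-1}\eta_n.
\]
Applying the triangle inequality in the $G_{N,\lambda}$-norm and using $\|G_{N,\lambda}^{-1}S_N\|_{G_{N,\lambda}}=\|S_N\|_{G_{N,\lambda}^{-1}}$ splits the control into a deterministic bias term and a stochastic self-normalised term. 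For the bias, since $G_{N,\lambda}\succeq \lambda I_d$ one has $\theta^\transp G_{N,\lambda}^{-1}\theta \leq \|\theta\|_2^2/\lambda$, and $\theta\in[-S,S]^d$ from \autoref{assumpt:structure} gives $\lambda\,\|G_{N,\lambda}^{-1}\theta\|_{G_{N,\lambda}} \leq \sqrt{\lambda d}\,S$, which is the second summand of $\beta_N(\delta)$.

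The stochastic term is handled by the method of mixtures applied to the vector-valued martingale $S_N$ (adapted to the filtration generated by the observations, where $\Phi_n$ is predictable because it depends only on $x_n$). For any deterministic $u\in\mathbb{R}^d$, set
\[
M_N^u \defeq \exp\!\left( u^\transp S_N - \tfrac{1}{2}\, u^\transp \tilde G_N u\right),\qquad \tilde G_N \defeq \sum_{n=1}^N \Phi_n^\transp \Sigma_p^{-1}\Phi_n.
\]
The key computation is the conditional MGF: since $\Phi_n$ is $\mathcal{F}_{n-1}$-measurable and $\eta_n$ is sub-Gaussian with proxy $\Sigma_p$ by \autoref{assumpt:gaussian-noise}, setting $v=\Sigma_p^{-1}\Phi_n u$ gives
\[
\expectedvalue\!\bigl[\exp(u^\transp\Phi_n^\transp\Sigma_p^{-1}\eta_n)\mid\mathcal{F}_{n-1}\bigr] \;=\;\expectedvalue\!\bigl[\exp(v^\transp\eta_n)\mid\mathcal{F}_{n-1}\bigr] \;\leq\; \exp\!\bigl(\tfrac{1}{2}\,u^\transp\Phi_n^\transp\Sigma_p^{-1}\Phi_n u\bigr),
\]
so $(M_N^u)$ is a non-negative supermartingale with $\expectedvalue M_N^u\leq 1$. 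This is the step that really uses the matrix structure and is the main new ingredient beyond the scalar proof.

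Next, mix $M_N^u$ against the Gaussian prior $h(u)=\mathcal{N}(0,\lambda^{-1}I_d)$ on $u$. A Gaussian integration in closed form yields
\[
\bar M_N \;\defeq\; \int M_N^u \, h(u)\,\dd u \;=\; \Bigl(\tfrac{\det(\lambda I_d)}{\det(G_{N,\lambda})}\Bigr)^{1/2} \exp\!\Bigl(\tfrac{1}{2}\|S_N\|_{G_{N,\lambda}^{-1}}^2\Bigr),
\]
and Fubini makes $\bar M_N$ a non-negative supermartingale with $\expectedvalue \bar M_N\leq 1$. Applying Ville's maximal inequality $\probability{\sup_N \bar M_N \geq 1/\delta}\leq \delta$ and taking logarithms yields
\[
\|S_N\|_{G_{N,\lambda}^{-1}} \;\leq\; \sqrt{2\ln\!\left(\tfrac{\det(G_{N,\lambda})^{1/2}}{\delta\,\det(\lambda I_d)^{1/2}}\right)}
\]
uniformly in $N$ with probability at least $1-\delta$. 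Combining this with the deterministic bias bound gives exactly \eqref{eq:confidence-ellipsoid}.

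The main obstacle I anticipate is the conditional MGF step above: one must be careful that the sub-Gaussianity of $\eta_n$ with proxy $\Sigma_p$ interacts correctly with the weighting matrix $\Sigma_p^{-1}$ appearing in both $S_N$ and $G_{N,\lambda}$, so that the exponents in $M_N^u$ match $\tilde G_N$ and the mixture collapses cleanly to $G_{N,\lambda}=\tilde G_N+\lambda I_d$. Once this alignment is verified, the Gaussian mixture integral and the Ville argument are standard. A minor additional check is the predictability of $\Phi_n$, which holds by \autoref{assumpt:structure} since $\Phi_n$ depends only on the observed $x_n$.
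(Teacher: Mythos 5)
Your proposal is correct and follows essentially the same route as the paper: the identical bias--variance decomposition, the same $\sqrt{\lambda d}\,S$ bound on the regularisation bias, and the same self-normalised supermartingale $M_N^u$ built from the sub-Gaussianity of $\eta_n$ with the $\Sigma_p^{-1}$ weighting. The only cosmetic differences are that you reach the split $\|\theta_{N,\lambda}-\theta\|_{G_{N,\lambda}}\leq\|S_N\|_{G_{N,\lambda}^{-1}}+\lambda\|\theta\|_{G_{N,\lambda}^{-1}}$ by a direct triangle inequality rather than the paper's Cauchy--Schwarz specialisation, and you carry out the Gaussian mixture integral and Ville's inequality explicitly where the paper invokes Theorem~14.7 of \citet{pena2008self}, which packages exactly that computation.
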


We convert this confidence ellipsoid $\cC_{N,\delta}$ from \eqref{eq:confidence-ellipsoid} into a polytope for $A(\theta)$. For simplicity, we present here a simple but coarse strategy: bound the ellipsoid by its enclosing axis-aligned hypercube:
\begin{align}
    \label{eq:polytope}
     A(\theta)\in \left\{ A_N +\sum_{i=1}^{2^d}\alpha_{i}\Delta A_{N,i}: \alpha\geq 0,  \sum_{i=1}^{2^d}\alpha_{i}=1\right\}
\end{align}
where $A_N = A(\theta_{N,\lambda}),\; h_i\in\{-1,1\}^d,\; \Delta A_{N,i} = {h_i} \sqrt{\frac{\beta_N(\delta)}{\lambda_{\max}(G_{N,\lambda})}}$. A tighter polytope derivation is presented in the Supplementary Material. % at the price of an increased computational cost required by the diagonalisation of $G_{N,\lambda}$.

\section{State Prediction}

\label{sec:prediction}

%To study the inclusion property \eqref{eq:inclusion-generic}, we must start by choosing a set representation $X(t)$. We consider interval prediction, rather than zonotope prediction for instance \citep[e.g.][]{le2012}, for the sake of simplicity of implementation and computational efficiency. We represent an interval as $X(t) = [\underline{x}(t), \overline{x}(t)]\in \Real^p\times \Real^p$, and \eqref{eq:inclusion-generic} becomes:
%$
%%\label{eq:inclusion-property}
%\underline{x}(t)\leq x(t)\leq\overline{x}(t), \forall t\geq t_N.
%$
A simple solution to \eqref{eq:inclusion-property} is proposed in \citep{Efimov2012}, where, given bounds $\underline{A}\leq A(\theta)\leq\overline{A}$ from $\cC_{N,\delta}$ they use matrix interval arithmetic to derive the predictor:
\begin{proposition}[Simple predictor of \citealt{Efimov2012}]
Assuming that \eqref{eq:confidence} is satisfied for the system \eqref{eq:dynamics}, then the interval predictor following $\underline{x}(t_N)=\overline{x}(t_N)={x}(t_N)$ and:
\begin{eqnarray}
\dot{\underline{x}}(t) = \underline{A}^{+}\underline{x}^{+}(t)-\overline{A}^{+}\underline{x}^{-}(t)-\underline{A}^{-}\overline{x}^{+}(t) +\overline{A}^{-}\overline{x}^{-}(t) +Bu(t) + D^{+}\underline{\omega}(t)-D^{-}\overline{\omega}(t),\label{eq:predictor-naive}\\
\dot{\overline{x}}(t) = \overline{A}^{+}\overline{x}^{+}(t)-\underline{A}^{+}\overline{x}^{-}(t)-\overline{A}^{-}\underline{x}^{+}(t)+\underline{A}^{-}\underline{x}^{-}(t)\nonumber+Bu(t) + D^{+}\overline{\omega}(t)-D^{-}\underline{\omega}(t),\nonumber
\end{eqnarray}
ensures the inclusion property \eqref{eq:inclusion-property} with confidence level $\delta$.
\end{proposition}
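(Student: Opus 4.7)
The plan is to reduce the claim to a deterministic one: by \eqref{eq:confidence}, with probability at least $1-\delta$ we have $\theta\in\cC_{N,\delta}$, on which event the polytope \eqref{eq:polytope} gives componentwise bounds $\underline{A}\leq A(\theta)\leq \overline{A}$, while \Cref{assumpt:gaussian-noise} guarantees $\underline{\omega}(t)\leq \omega(t)\leq \overline{\omega}(t)$ for all $t$. It then suffices to prove deterministically that, whenever these two enclosures hold and $\ux(t_N)=\ox(t_N)=x(t_N)$, the predictor \eqref{eq:predictor-naive} satisfies $\ux(t)\leq x(t)\leq \ox(t)$ for all $t\geq t_N$; intersecting this with the $1-\delta$ event of \Cref{thm:confidence_ellipsoid} then gives \eqref{eq:inclusion-property} at the desired confidence level.

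The core ingredient I would establish first is the pointwise interval-arithmetic bound: for any matrix $M$ with $\underline{M}\leq M\leq \overline{M}$ and vector $z$ with $\underline{z}\leq z\leq \overline{z}$,
$$\underline{M}^+\underline{z}^+ - \overline{M}^+ \underline{z}^- - \underline{M}^- \overline{z}^+ + \overline{M}^- \overline{z}^- \;\leq\; Mz \;\leq\; \overline{M}^+ \overline{z}^+ - \underline{M}^+ \overline{z}^- - \overline{M}^- \underline{z}^+ + \underline{M}^- \underline{z}^-.$$
This follows by writing $M=M^+-M^-$ and $z=z^+-z^-$ and using the componentwise monotonicity of multiplication on the nonnegative orthant (e.g.\ $0\leq M^+\leq \overline{M}^+$ and $0\leq z^+\leq \overline{z}^+$ yield $M^+z^+\leq \overline{M}^+\overline{z}^+$; the other three products are analogous). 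Applied to $A(\theta)x$ and, with $\underline{D}=\overline{D}=D$, to $D\omega$, this produces exactly the right-hand side of \eqref{eq:predictor-naive} modulo the known $Bu$ term.

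Next I would set up a comparison on the error coordinates $e^+(t):=\ox(t)-x(t)$ and $e^-(t):=x(t)-\ux(t)$, both vanishing at $t_N$. Whenever $e^+(t),e^-(t)\geq 0$, substituting \eqref{eq:dynamics} for $\dot x(t)$ into $\dot e^\pm(t)$ and invoking the lemma shows that on every coordinate $i$ where $e^+_i(t)=0$ (resp.\ $e^-_i(t)=0$), the corresponding component $\dot e^+_i(t)$ (resp.\ $\dot e^-_i(t)$) is nonnegative. This is precisely the Nagumo tangent condition for the positive orthant to be positively invariant under the extended dynamics of $(e^+,e^-)$, which yields $e^\pm(t)\geq 0$ for all $t\geq t_N$, i.e.\ \eqref{eq:inclusion-property}.

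The main obstacle is making this invariance step rigorous despite the fact that the $(\cdot)^\pm$ decomposition renders \eqref{eq:predictor-naive} only piecewise $C^1$ in $(\ux,\ox)$, so Gronwall does not apply directly and tangential boundary behaviour must be ruled out carefully. I would handle this by approximation: perturb the initialisation to $\ux(t_N)=x(t_N)-\varepsilon\mathbf{1}$ and $\ox(t_N)=x(t_N)+\varepsilon\mathbf{1}$ so that the enclosure of $x$ is initially strict; any first crossing of the boundary would then have to be tangential, which the strict form of the rate inequality above rules out. Continuous dependence of the piecewise-linear ODE on initial conditions as $\varepsilon\to 0$ recovers the claim in the original setting.
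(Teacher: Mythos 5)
Your proposal is correct and follows the standard route: this proposition is a cited result that the paper does not reprove (its appendix only proves the novel results), but the paper does record your key ingredient verbatim as \Cref{lem:interval}, namely the interval-arithmetic bound \eqref{eq:Interval2}, and the remainder of your argument (conditioning on the event \eqref{eq:confidence}, then a positive-invariance/first-crossing argument for the error coordinates) is exactly how \citet{Efimov2012} establish \eqref{eq:inclusion-property}. One small wording fix: the tangential crossing at the first touching time $T$ is excluded not by a ``strict rate inequality'' (the rate bound is only $\dot e^\pm_i\geq 0$) but by integrating that non-strict bound over $[t_N,T]$ against the strict initial gap $\varepsilon$, which yields $e^\pm_i(T)\geq\varepsilon>0$ and the desired contradiction.
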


However, \citet{leurent2019interval} showed that this predictor can have unstable dynamics, even for stable systems, which causes a fast build-up of uncertainty. They proposed an enhanced predictor which exploits the polytopic structure \eqref{eq:polytope} to produce more stable predictions, at the price of a requirement:

\begin{assumption}
\label{assumpt:metzler}
There exists an orthogonal matrix $Z\in\Real^{p\times p}$ such that $Z^\transp A_N Z$ is Metzler\footnote{We say that a matrix is Metzler when all its non-diagonal coefficients are non-negative.}.
\end{assumption}
In practice, this assumption is often verified: it is for instance the case whenever $A_N$ is diagonalisable. The similarity transformation of \citep{Efimov2013} provides a method to compute such $Z$ when the system is observable. To simplify the notation, we will further assume that $Z = I_p$. Denote $
\Delta A_{+}=\sum_{i=1}^{2^d}\Delta A_{N,i}^{+},\;\Delta A_{-}=\sum_{i=1}^{2^d}\Delta A_{N,i}^{-}$.
% The similarity transformation of \citep{Efimov_a2013} provides a method to compute such $P$ whenever there exist $C_1,C_2$ which make $(A(\theta), C_1)$ and $(A_N, C_2)$ observable.
% Under this assumption, without loss of generality up to a change of basis $x\rightarrow P^{-1}x$ that we do not write for the sake of readability, this assumption is equivalent to considering that $A_N$ itself is Metzler.

\begin{proposition}[Enhanced predictor of \citealt{leurent2019interval}]
\label{prop:predictor}
Assuming that \eqref{eq:polytope} and \Cref{assumpt:metzler} are satisfied for the system \eqref{eq:dynamics}, then the interval predictor following $ \underline{x}(t_N)=\overline{x}(t_N)={x}(t_N)$ and:
\begin{eqnarray}
\dot{\underline{x}}(t) & = & A_{N}\underline{x}(t)-\Delta A_{+}\underline{x}^{-}(t)-\Delta A_{-}\overline{x}^{+}(t)  +Bu(t)+D^{+}\underline{\omega}(t)-D^{-}\overline{\omega}(t),\label{eq:interval-predictor}\\
\dot{\overline{x}}(t) & = & A_{N}\overline{x}(t)+\Delta A_{+}\overline{x}^{+}(t)+\Delta A_{-}\underline{x}^{-}(t)  +Bu(t)+D^{+}\overline{\omega}(t)-D^{-}\underline{\omega}(t),\nonumber
\end{eqnarray}
ensures the inclusion property \eqref{eq:inclusion-property} with confidence level $\delta$.
\end{proposition}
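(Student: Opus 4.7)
The plan is to first condition on the high-probability event $\mathcal{E} = \{\theta \in \mathcal{C}_{N,\delta}\}$, which has probability at least $1-\delta$ by \autoref{thm:confidence_ellipsoid}. On $\mathcal{E}$, the polytopic enclosure \eqref{eq:polytope} gives simplex weights $(\alpha_i)_{i=1}^{2^d}$ such that $A(\theta) = A_N + \Delta A$ with $\Delta A = \sum_i \alpha_i \Delta A_{N,i}$. It is then sufficient to show a deterministic inclusion: for every realisation of $\omega(t) \in [\underline{\omega}(t), \overline{\omega}(t)]$, the true trajectory $x(t)$ of \eqref{eq:dynamics} is sandwiched between the bounds generated by \eqref{eq:interval-predictor}. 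WLOG we take $Z = I_p$ so that $A_N$ is Metzler, as the general case follows after the orthogonal change of coordinates.

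Next, I would introduce the errors $\underline{e}(t) = x(t) - \underline{x}(t)$ and $\overline{e}(t) = \overline{x}(t) - x(t)$, both vanishing at $t = t_N$. Subtracting \eqref{eq:interval-predictor} from \eqref{eq:dynamics} gives
\begin{align*}
\dot{\underline{e}}(t) &= A_N\,\underline{e}(t) + \bigl[\Delta A\, x(t) + \Delta A_+\,\underline{x}^-(t) + \Delta A_-\,\overline{x}^+(t)\bigr] + \bigl[D\omega(t) - D^+\underline{\omega}(t) + D^-\overline{\omega}(t)\bigr],\\
\dot{\overline{e}}(t) &= A_N\,\overline{e}(t) + \bigl[\Delta A_+\,\overline{x}^+(t) + \Delta A_-\,\underline{x}^-(t) - \Delta A\, x(t)\bigr] + \bigl[D^+\overline{\omega}(t) - D^-\underline{\omega}(t) - D\omega(t)\bigr].
\end{align*}
Because $A_N$ is Metzler, the system $\dot{z} = A_N z + f(t)$ with $z(t_N) = 0$ is cooperative: if $f(t) \geq 0$ componentwise for all $t$, then $z(t) \geq 0$ for all $t \geq t_N$. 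The proof thus reduces to showing componentwise non-negativity of both bracketed forcing terms.

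The disturbance brackets are handled immediately: writing $D = D^+ - D^-$ with $D^\pm \geq 0$ and using $\underline{\omega} \leq \omega \leq \overline{\omega}$, one obtains $D^+\underline{\omega} - D^-\overline{\omega} \leq D\omega \leq D^+\overline{\omega} - D^-\underline{\omega}$. The $\Delta A$ brackets are the substance of the argument and I would treat them by a continuity / bootstrapping argument on the set $\mathcal{I} = \{t \geq t_N : \underline{x}(s) \leq x(s) \leq \overline{x}(s) \text{ for all } s \in [t_N,t]\}$, which is non-empty because the errors vanish at $t_N$. On $\mathcal{I}$, the convex combination structure gives the componentwise bounds $-\Delta A_- \leq \Delta A \leq \Delta A_+$, and the decomposition $x = x^+ - x^-$ together with $x^+ \leq \overline{x}^+$ and $x^- \leq \underline{x}^-$ (consequences of the induction hypothesis on $\mathcal{I}$) can be combined in a straightforward sign analysis to verify $\Delta A\, x + \Delta A_+\,\underline{x}^- + \Delta A_-\,\overline{x}^+ \geq 0$ and the symmetric inequality for $\overline{e}$. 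The cooperativity argument then yields $\underline{e}(t), \overline{e}(t) \geq 0$ on $\mathcal{I}$, so $\mathcal{I}$ is both closed and open in $[t_N,\infty)$ and hence equals $[t_N,\infty)$.

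The main obstacle I expect is the positivity of the $\Delta A$ brackets, because both $\Delta A$ and $x$ carry arbitrary signs: one must split each factor into positive and negative parts and carefully aggregate the resulting eight sign combinations, exploiting the inequalities $\Delta A_+ \geq \sum_i \alpha_i \Delta A_{N,i}^+$ and $\Delta A_- \geq \sum_i \alpha_i \Delta A_{N,i}^-$ that follow from $\alpha_i \geq 0$ and $\sum_i \alpha_i = 1$. A secondary technical point is that the interval predictor vector field is only piecewise smooth at the sign changes of $\underline{x}$ and $\overline{x}$, so the cooperativity conclusion should be justified via a Filippov-type comparison lemma, as in \citep{leurent2019interval}. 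Since all bounds are uniform in $\omega \in [\underline{\omega},\overline{\omega}]$ and in $\alpha$ in the simplex, the inclusion property \eqref{eq:inclusion-property} holds on $\mathcal{E}$ for every admissible disturbance, which concludes the proof at confidence level $\delta$.
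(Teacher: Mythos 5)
Your proposal is correct, but note that the paper does not actually prove this proposition: it is imported verbatim from \citet{leurent2019interval}, and the appendix explicitly restricts its proofs to the novel results, so there is no in-paper argument to compare against. Your reconstruction follows the standard route for such interval predictors and all the key steps check out. In particular, the error dynamics for $\underline{e}=x-\underline{x}$ and $\overline{e}=\overline{x}-x$ are computed correctly; the non-negativity of the $\Delta A$ brackets is exactly an instance of the interval-arithmetic bound \eqref{eq:Interval2} (stated as \Cref{lem:interval} in the appendix) applied with $\underline{A}=-\Delta A_-$ and $\overline{A}=\Delta A_+$, so that $-\Delta A_+\underline{x}^- - \Delta A_-\overline{x}^+ \leq \Delta A\,x \leq \Delta A_+\overline{x}^+ + \Delta A_-\underline{x}^-$ whenever $\underline{x}\leq x\leq\overline{x}$ and $-\Delta A_-\leq\Delta A\leq\Delta A_+$; and the Metzler property of $A_N$ plus the bootstrap on $\mathcal{I}$ is the usual forward-invariance argument for the non-negative orthant of $(\underline{e},\overline{e})$. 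Two minor remarks: the boundary case of the open/closed argument deserves one more line (e.g.\ a Nagumo sub-tangentiality check, $\dot{\underline{e}}_i\geq 0$ whenever $\underline{e}_i=0$ and the remaining coordinates are non-negative, or an $\epsilon$-perturbation of the predictor followed by a limit), since non-strict inequalities alone do not immediately make $\mathcal{I}$ open; and the right-hand side of \eqref{eq:interval-predictor} is globally Lipschitz in $(\underline{x},\overline{x})$ because $z\mapsto z^{\pm}$ is Lipschitz, so Carath\'eodory solutions suffice and the Filippov machinery you mention is not actually needed.
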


\begin{figure}[tp]
	\centering
	{\includegraphics[trim={0 0.6cm 0 0.4cm}, clip, width=0.6\linewidth]{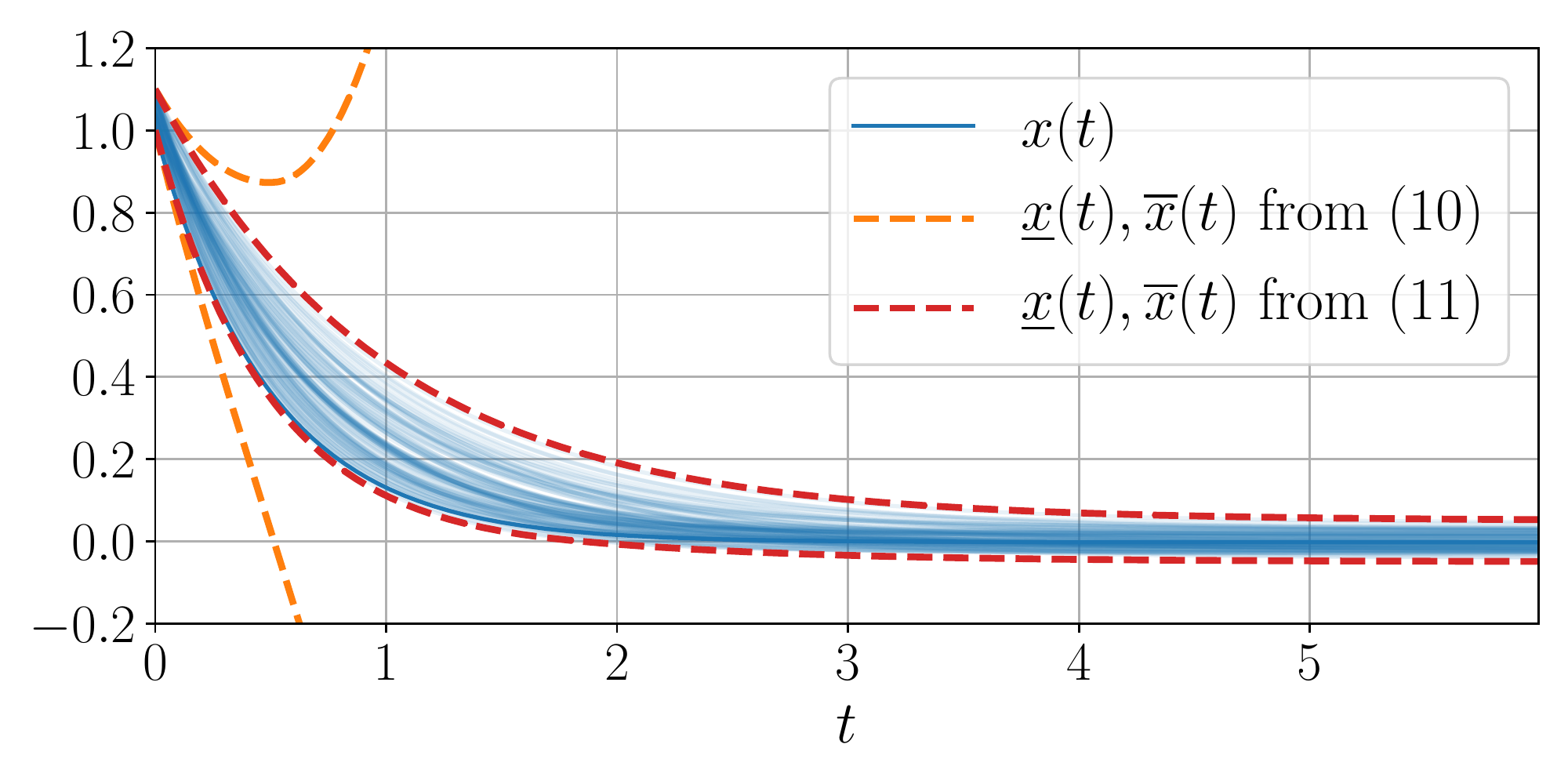}}
	\caption{Comparison of \eqref{eq:predictor-naive} and \eqref{eq:interval-predictor} for a simple system $\dot{x}(t)=-\theta x(t)+\omega(t)$, with $\theta\in[1, 2]$ and $\omega(t) \in [-0.05, 0.05]$.}
	\label{fig:predictor_example}
\end{figure}

\Cref{fig:predictor_example} compares the performance of the predictors \eqref{eq:predictor-naive} and \eqref{eq:interval-predictor} in a simple example. It suggests to always prefer \eqref{eq:interval-predictor} whenever \Cref{assumpt:metzler} is verified, and only fallback to \eqref{eq:predictor-naive} as a last resort.

\section{Robust Control}

\label{sec:control}
To evaluate the robust objective $V^r$ \eqref{eq:robust-control}, we approximate it thanks to the interval prediction $[\underline{x}(t), \overline{x}(t)]$.

\begin{definition}[Surrogate objective]
	Let $\underline{x}_n(\bu), \overline{x}_n(\bu)$ following the dynamics defined in \eqref{eq:interval-predictor} and
\begin{equation}
\label{eq:surrogate-objective} 
\hat{V}^r(\bu) \eqdef \sum_{n=N+1}^\infty \gamma^n \underline{R}_n(\bu)\quad\text{where}\quad\underline{R}_n(\bu) \eqdef \min_{x\in[\underline{x}_n(\bu), \overline{x}_n(\bu)]}  R(x). %\label{eq:pessimistic-rewards}
\end{equation}
\end{definition}
Such a substitution makes this pessimistic reward $\underline{R_n}$ \emph{not Markovian}, since the worst case is assessed over the whole past trajectory.

\begin{theorem}[Lower bound]
\label{prop:lower-bound}
The surrogate objective  \eqref{eq:surrogate-objective} is a lower bound of the objective  \eqref{eq:robust-control}.
\begin{equation*}
\hat{V}^r(\bu) \leq V^r(\bu) \leq V(\bu)
\end{equation*}
\end{theorem}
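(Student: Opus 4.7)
The plan is to prove the two inequalities separately, both of which follow directly once one is on the high-probability event $\{\theta \in \cC_{N,\delta}\}$ guaranteed by \autoref{thm:confidence_ellipsoid} together with the inclusion property of \autoref{prop:predictor}.

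For the right inequality $V^r(\bu) \leq V(\bu)$, I would simply note that $V(\bu)$ is the expected (or pathwise) return under the \emph{true} parameter $\theta$ and true disturbance realisation $\omega$, whereas $V^r(\bu)$ is an infimum of the same quantity over the set $\cC_{N,\delta} \times [\underline{\omega},\overline{\omega}]^{\Real}$. On the event that $\theta \in \cC_{N,\delta}$ and $\omega(t) \in [\underline{\omega}(t), \overline{\omega}(t)]$, the pair $(\theta, \omega)$ is feasible for the inner infimum, so $V^r(\bu)$ is bounded above by the value at this pair, which is exactly $V(\bu)$.

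For the left inequality $\hat{V}^r(\bu) \leq V^r(\bu)$, I would fix an arbitrary $(\theta,\omega) \in \cC_{N,\delta} \times [\underline{\omega},\overline{\omega}]^{\Real}$ and apply the inclusion property \eqref{eq:inclusion-property} given by \autoref{prop:predictor}: for all $n \geq N+1$, the true state $x_n(\bu,\omega)$ lies inside $[\underline{x}_n(\bu), \overline{x}_n(\bu)]$. By definition of the pointwise minimum,
\begin{equation*}
R(x_n(\bu,\omega)) \;\geq\; \min_{x \in [\underline{x}_n(\bu), \overline{x}_n(\bu)]} R(x) \;=\; \underline{R}_n(\bu).
\end{equation*}
Since $\gamma^n > 0$, summing this inequality over $n \geq N+1$ gives $\sum_{n=N+1}^\infty \gamma^n R(x_n(\bu,\omega)) \geq \hat{V}^r(\bu)$, and taking the infimum on the left-hand side over all admissible $(\theta,\omega)$ preserves the inequality since the right-hand side no longer depends on $(\theta,\omega)$.

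I do not expect any real obstacle here; the only subtleties worth flagging are (i) the existence of the minimum in the definition of $\underline{R}_n(\bu)$ is ensured because $R$ is bounded and the interval is compact (an infimum would suffice otherwise), (ii) the series converges since $R$ is bounded and $\gamma \in (0,1)$, so exchanging sum and infimum is unproblematic, and (iii) the entire chain holds on the event $\{\theta \in \cC_{N,\delta}\}$, which has probability at least $1-\delta$ by \autoref{thm:confidence_ellipsoid}; this qualifier should be mentioned explicitly so that the probabilistic nature of the guarantee is not lost.
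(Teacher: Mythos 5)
Your proposal is correct and follows essentially the same route as the paper's proof: the left inequality comes from the inclusion property \eqref{eq:inclusion-property} giving $R(x_n)\geq \underline{R}_n(\bu)$ pointwise before taking the infimum over $(\theta,\omega)$, and the right inequality holds because the infimum over the admissible set is dominated by the (expected) return under the true realisation. Your extra remarks on the high-probability event and convergence of the series are sound and only make explicit what the paper leaves implicit.
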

Consequently, since all our approximations are conservative, if we manage to find a control sequence such that no \textit{``bad event''} (e.g. a collision) happens according to the surrogate objective $\hat{V}^r$, they are \emph{guaranteed} not to happen either when the controls are executed on the true system. 

To maximise $\hat{V}^r$, we cannot use DP algorithms since the state space is continuous and the pessimistic rewards are non-Markovian. Rather, we turn to tree-based planning algorithms, which optimise a sequence of actions based on the corresponding sequence of rewards, without requiring Markovity nor state enumeration. In particular, we consider the \emph{Optimistic Planning of Deterministic Systems} (\texttt{OPD}) algorithm \citep{Hren2008} tailored for the case when the relationship between actions and rewards is deterministic. Indeed, the stochasticity of the disturbances and measurements is encased in $\hat{V}^r$: given the observations up to time $N$ both the predictor dynamics \eqref{eq:interval-predictor} and the pessimistic rewards in \eqref{eq:surrogate-objective} are deterministic. At each planning iteration $k\in[K]$, \texttt{OPD} progressively builds a tree $\cT_{k+1}$ by forming upper-bounds $B_a(k)$ over the value of sequences of actions $a$, and expanding\footnote{The expansion of a leaf node $a$ refers to the simulation of its children transitions $aA = \{ab, b\in A\}$} the leaf $a_k$ with highest upper-bound: 
\begin{equation}
\label{eq:opd}
a_k = \argmax_{a\in\cL_k} B_a(k), \quad B_a(k) = \sum_{n=0}^{h(a)-1} \underline{R}_n(a) + \frac{\gamma^{h(a)}}{1-\gamma}
\end{equation}
where $\cL_k$ is the set of leaves of $\cT_k$, $h(a)$ is the length of the sequence $a$, and $\underline{R}_n(a)$ the pessimistic reward \eqref{eq:surrogate-objective} obtained at time $n$ by following the controls $u_n = \pi_{a_n}(x_n)$.
\begin{lemma}[Planning performance of \citealt{Hren2008}]
\label{theorem:opd-regret}
The suboptimality of the \texttt{OPD} algorithm \eqref{eq:opd} applied to the surrogate objective \eqref{eq:surrogate-objective} after $K$ planning iterations is:
$$
\hat{V}^r(a_{\star}) - \hat{V}^r({a_K}) = \cO\left(K^{-\frac{\log 1/\gamma}{\log \kappa}}\right);
$$
where $\kappa \eqdef \limsup_{h\rightarrow\infty} \left|\left\{a\in A^h: \hat{V}^r(a)\geq \hat{V}^r(a^{\star}) - \frac{\gamma^{h+1}}{1-\gamma}\right\}\right|^{1/h}$ is a problem-dependent measure of the proportion of near-optimal paths.
\end{lemma}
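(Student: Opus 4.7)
}
The strategy is to verify that the surrogate maximisation problem $\sup_\ba \hat{V}^r(\ba)$ falls within the scope of the analysis of Hren and Munos~2008 for \texttt{OPD}, and then to invoke their regret bound verbatim. Concretely, their proof requires only three ingredients: (i) a finite action set $\cA$, (ii) rewards that are deterministic and bounded functions of the action sequence, (iii) validity of the upper bound $B_a(k)$ used to guide node expansions. The first ingredient is immediate since $\cA$ is finite by our hierarchical action discretisation.

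For the second ingredient, observe that once the dataset $\cD_N$ has been collected, the confidence region $\cC_{N,\delta}$ and the polytope vertices $\{\Delta A_{N,i}\}$ in \eqref{eq:polytope} are fixed. The interval predictor \eqref{eq:interval-predictor} is an ordinary differential equation with no stochastic inputs (the disturbance bounds $[\underline{\omega},\overline{\omega}]$ enter deterministically), so the trajectory $([\underline{x}_n(\ba), \overline{x}_n(\ba)])_{n\geq N}$ is a deterministic function of the action sequence $\ba$. Consequently the pessimistic reward $\underline{R}_n(\ba) = \min_{x\in[\underline{x}_n(\ba),\overline{x}_n(\ba)]} R(x)$ is deterministic in $\ba$ and bounded in $[0,1]$ by the assumption on $R$. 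The first step of my proof would simply state this reduction formally.

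For the third ingredient, I would verify that $B_a(k)$ upper-bounds $\hat{V}^r(a')$ for every infinite extension $a'$ of the leaf $a$. By the previous paragraph, the intervals (hence $\underline{R}_n$) depend only on the prefix of the action sequence up to time $n$; so for any extension $a'$ of $a$ and any $n<h(a)$ we have $\underline{R}_n(a') = \underline{R}_n(a)$. Splitting $\hat{V}^r(a')$ at depth $h(a)$ and bounding the tail by $\sum_{n\geq h(a)} \gamma^n \leq \gamma^{h(a)}/(1-\gamma)$ then yields $\hat{V}^r(a') \leq B_a(k)$, as required.

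With these three properties established, the proof of \citet{Hren2008} applies without modification: one bounds the number of expanded nodes at depth $h$ by $|\cT_k|\cdot \kappa^h$-type arguments, relates the depth attained after $K$ expansions to $\kappa$, and derives the suboptimality rate $\cO(K^{-\log(1/\gamma)/\log \kappa})$. The only subtlety worth flagging is that $\underline{R}_n$ is \emph{not} a Markovian reward (it is a function of the whole past prefix through the interval dynamics), but \texttt{OPD} never uses Markovity: it operates directly on action sequences and their associated reward sequences. Thus no modification of the original argument is needed, and the step most worth double-checking is simply the upper-bound property of $B_a(k)$, which is where the prefix-determinism of the interval predictor is essential.
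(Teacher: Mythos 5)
Your proposal is correct and matches the paper's treatment: the paper does not re-prove this lemma but imports it directly from \citet{Hren2008}, and your three-point verification (finite action set, rewards that are deterministic and bounded in $[0,1]$ once $\cD_N$ is fixed, and validity of the $B$-values via prefix-determinism of the interval predictor plus the $\gamma^{h(a)}/(1-\gamma)$ tail bound) is exactly the reduction needed to make that citation legitimate, including your correct observation that \texttt{OPD} never uses Markovity. The only place the paper actually spells out the underlying depth-versus-$\kappa$ counting argument is for the robust multi-model variant (\Cref{theorem:drop-regret} in the supplementary material), which reproduces the same proof skeleton you sketch.
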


Hence, by using enough computational budget $K$ for planning we can get as close as we want to the optimal surrogate value $\hat{V}^r(a^{\star})$, at a polynomial rate. Unfortunately, there exists a gap between $\hat{V}^r$ and the true robust objective $V^r$, which stems from three approximations: (i) the true reachable set was approximated by an enclosing interval in \eqref{eq:inclusion-property}; (ii) the time-invariance of the dynamics uncertainty $A(\theta)\in\cC_{N,\delta}$ was handled by the interval predictor \eqref{eq:interval-predictor} as if it were a time-varying uncertainty $A(\theta(t))\in\cC_{N,\delta},\forall t$ ; and (iii) the lower-bound $\sum\min\leq \min\sum$ used to define the surrogate objective \eqref{eq:surrogate-objective} is not tight. However, this gap can be bounded under additional assumptions.

\begin{theorem}[Suboptimality bound]
	\label{thm:minimax-regret-bound}
		Under two conditions:
		\begin{enumerate}
			\item a Lipschitz regularity assumption for the reward function $R$:
			\item a stability condition: there exist $P>0,Q_0\in\Real^{p\times p}$, $\rho>0$, and $N_0\in\Natural$ such that
			$$\forall N>N_0,\quad \begin{bmatrix}
			A_N^\transp P + P A_N + Q_0 & P|D|  \\
			|D|^\transp P & -\rho I_r \\
			\end{bmatrix}< 0;$$
		\end{enumerate}
		we can bound the suboptimality of \Cref{alg:full} with planning budget $K$ as:
		\begin{equation*}
		V(a_\star) - \hat{V}^r(a_K) \leq  \hlr{\underbrace{\Delta_\omega}_{\substack{\text{robustness to}\\ \text{disturbances}}}} + \hlb{\underbrace{\cO\left(\frac{\beta_N(\delta)^2}{\lambda_{\min}(G_{N,\lambda})}\right)}_{\text{estimation error}}} + \hlg{\underbrace{\cO\left(K^{-\frac{\log 1/\gamma}{\log \kappa}}\right)}_{\text{planning error}}} 
		\end{equation*}
		with probability at least $1-\delta$, where $V(a)$ is the optimal expected return when executing an action $a\in\cA$, $a_\star$ is an optimal action, and $\hlr{\Delta_\omega}$ is a constant which corresponds to an irreducible suboptimality suffered from being robust to instantaneous disturbances $\omega(t)$.
\end{theorem}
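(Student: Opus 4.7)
The plan is to control the target via the telescope
$$V(a_\star) - \hat{V}^r(a_K) \;=\; \bigl[V(a_\star) - \hat{V}^r(a_\star)\bigr] \;+\; \bigl[\hat{V}^r(a_\star) - \hat{V}^r(a_K)\bigr],$$
in which the first bracket (the \emph{approximation gap}) will produce the red and blue terms, and the second bracket (the \emph{planning gap}) produces the green one. The first summand is nonnegative by \Cref{prop:lower-bound}. For the second one, letting $\hat a_\star\in\argmax_{a}\hat V^r(a)$ be the surrogate optimum, we have $\hat V^r(a_\star)\le \hat V^r(\hat a_\star)$, and it is enough to bound $\hat V^r(\hat a_\star) - \hat V^r(a_K)$, which is exactly the \texttt{OPD} suboptimality. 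Since the pessimistic rewards $\underline R_n$ depend only on the interval predictor, whose dynamics are deterministic given $\cD_N$, \texttt{OPD} applies off the shelf and \Cref{theorem:opd-regret} delivers the green planning error $\cO(K^{-\log(1/\gamma)/\log \kappa})$.

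For the approximation gap, I would condition on the $(1-\delta)$-event $\{\theta\in\cC_{N,\delta}\}$ provided by \Cref{thm:confidence_ellipsoid}. On this event, \Cref{prop:predictor} ensures $x_n(a_\star,\omega)\in[\underline x_n(a_\star),\overline x_n(a_\star)]$ for all $n>N$, so the Lipschitz regularity of $R$ (with constant $L$) yields
$$V(a_\star) - \hat{V}^r(a_\star) \;\le\; L \sum_{n>N}\gamma^{n}\,\bigl\|\overline x_n(a_\star) - \underline x_n(a_\star)\bigr\|.$$
Writing $\Delta x := \overline x - \underline x$ and subtracting the two lines of \eqref{eq:interval-predictor} produces a differential inequality
$$\dot{\Delta x}(t)\;\le\; A_N\,\Delta x(t) + W_\theta(t) + W_\omega(t),$$
where the parameter-induced input satisfies $\|W_\theta(t)\|\le 2\|\Delta A_\pm\|\,\|x(t)\|$, with the polytope diameter of order $\|\Delta A_\pm\|^2 = \cO(\beta_N(\delta)^2/\lambda_{\min}(G_{N,\lambda}))$ coming from \eqref{eq:polytope}, while the disturbance-induced input obeys $\|W_\omega(t)\|\le \|D\|\,C_\omega(t)$.

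The stability LMI is precisely the Schur-complement/bounded-real condition under which the quadratic Lyapunov function $V_P(e)=e^\transp P e$ satisfies an inequality of the form $\dot V_P\le -\lambda_{\min}(Q_0)\|e\|^2 + \rho\|W_\omega\|^2 + c\|W_\theta\|^2$ for some constant $c>0$. A uniform a priori bound on $\|x(t)\|$ along the closed loop comes from the same Lyapunov function applied to \eqref{eq:dynamics}, which lets one integrate the Lyapunov inequality against the discount factor and use $\sum_n\gamma^n C_\omega(t_n)<\infty$ from \Cref{assumpt:gaussian-noise} to arrive at
$$L\sum_{n>N}\gamma^n\,\|\Delta x_n\|\;\le\;\Delta_\omega+\cO\!\left(\frac{\beta_N(\delta)^2}{\lambda_{\min}(G_{N,\lambda})}\right),$$
where $\Delta_\omega$ collects the irreducible disturbance contribution. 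Reassembling the planning and approximation contributions yields the stated bound.

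The main obstacle will be the Lyapunov step: parameter uncertainty enters \eqref{eq:interval-predictor} multiplicatively through $\Delta A_\pm\,x^\pm(t)$ rather than as an exogenous input, so one has to combine a small-gain/ISS-type argument with the Schur complement hidden in the LMI, while simultaneously deriving the uniform state bound on $\|x(t)\|$ that makes the $\|W_\theta\|$ term usable. A secondary subtlety is that $\underline R_n$ is non-Markovian, but this is harmless since \texttt{OPD} consumes rewards sequence-by-sequence and never invokes dynamic programming.
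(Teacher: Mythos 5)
Your overall architecture matches the paper's: the same telescoping into an approximation gap and a planning gap, the same use of \Cref{theorem:opd-regret} for the green term (with the same observation that the non-Markovian pessimistic rewards are harmless for \texttt{OPD}), and the same first step for the approximation gap --- conditioning on the confidence event, invoking the inclusion property, and paying $L\sum_{n>N}\gamma^n\|\ox_n-\ux_n\|$ by Lipschitzness. The divergence, and the gap, lies in how you control the interval width, i.e.\ exactly the step you flag as ``the main obstacle.''

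You bound the parameter-induced input by $\|W_\theta(t)\|\le 2\|\Delta A_\pm\|\,\|x(t)\|$ and treat it as an exogenous disturbance, which forces you to (i) produce a uniform a priori bound on $\|x(t)\|$ along the closed loop and (ii) accept an additive steady-state offset of order $\|\Delta A_\pm\|$. Point (i) is not available: the LMI is a hypothesis on $A_N$ only, the closed loop involves the switched controllers $\pi_{a_n}$ and their offsets $u_a$, and neither the theorem's hypotheses nor the paper's proof ever assume or establish boundedness of $x(t)$ over the infinite horizon. Point (ii) is fatal to the rate: since $\|\Delta A_\pm\|=\cO\bigl(\beta_N(\delta)/\sqrt{\lambda_{\min}(G_{N,\lambda})}\bigr)$ (\Cref{lem:dynamics-est-bound}), your ISS-type bound would yield an estimation-error term of order $\beta_N(\delta)/\sqrt{\lambda_{\min}(G_{N,\lambda})}$, the square root of --- hence asymptotically much larger than --- the claimed $\cO\bigl(\beta_N(\delta)^2/\lambda_{\min}(G_{N,\lambda})\bigr)$. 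The paper resolves the obstacle differently: writing $e=\ox-\ux$, the width dynamics are $\dot e = A_N e + |\Delta A|(\ox^{+}+\ux^{-})+|D|(\overline\omega-\underline\omega)$, and the key step is to bound the positive-part term by the width itself (the paper uses $|\ux^{+}+\ox^{-}|\le 2|e|$), so the parametric perturbation is \emph{endogenous} in $e$ rather than an exogenous input. It is then absorbed into the Lyapunov decrement via a Schur complement of the LMI, choosing the slack $\alpha=\cO(\|\Delta A\|_F^2)=\cO\bigl(\beta_N(\delta)^2/\lambda_{\min}(G_{N,\lambda})\bigr)$ and $Q=\tfrac12 Q_0-4\alpha I_p$; the estimation error then only perturbs the decay rate, and a first-order expansion in $\alpha$ gives $\|e(t)\|\le\bigl(C_0+\cO(\beta_N(\delta)^2/\lambda_{\min}(G_{N,\lambda}))\bigr)C_\omega(t)$ --- multiplicative in $C_\omega$, requiring no state bound, and at the correct quadratic rate, which is what produces both $\Delta_\omega$ and the blue term after summing against $\gamma^n$. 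Without this absorption step your argument proves a strictly weaker bound under an additional unproven hypothesis.
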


It is difficult to check the validity of the stability condition 2. since it applies to matrices $A_N$ produced by the algorithm rather than to the system parameters. A stronger but easier to check condition is that the polytope \eqref{eq:polytope} at some iteration becomes included in a set where this property is uniformly satisfied. For instance, if the features are sufficiently excited, the estimation converges to a neighbourhood of the true dynamics $A(\theta)$. This also allows to further bound the input-dependent \hlb{estimation error} term.

\begin{corollary}[Asymptotic near-optimality]
	\label{cor:pe}
		Under an additional persistent excitation (PE) assumption
		\begin{align}
		\label{eq:pe}
		\exists \underline{\phi},\overline{\phi}>0: \forall n\geq n_0,\quad \underline{\phi}^2 \leq \lambda_{\min}(\Phi_{n}^\transp\Sigma_{p}^{-1}\Phi_{n}) \leq \overline{\phi}^2,
		\end{align} the stability condition 2. of \Cref{thm:minimax-regret-bound} can be relaxed to apply to the true system: there exist $P,Q_0,\rho$ such that
		$$\begin{bmatrix}
		A(\theta)^\transp P + P A(\theta) + Q_0 & P|D|  \\
		|D|^\transp P & -\rho I_r \\
		\end{bmatrix}< 0;$$
		and the suboptimality bound takes the more explicit form
		\begin{equation*}
		V(a_\star) - \hat{V}^r(a_K) \leq  \hlr{{\Delta_\omega}} + \hlb{{\cO\left(\frac{\log\left(N^{d/2}/\delta\right)}{N}\right)}} + \hlg{{\cO\left(K^{-\frac{\log 1/\gamma}{\log \kappa}}\right)}} 
		\end{equation*}
		which ensures asymptotic near-optimality when $N\to\infty$ and $K\to\infty$.
\end{corollary}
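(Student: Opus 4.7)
The plan is to use the persistent excitation condition \eqref{eq:pe} in two complementary ways: (i) to turn the implicit rate $\beta_N(\delta)^2/\lambda_{\min}(G_{N,\lambda})$ from \Cref{thm:confidence_ellipsoid} into the announced $\cO(\log(N^{d/2}/\delta)/N)$ bound, and (ii) to obtain consistency $A_N\to A(\theta)$, which by a continuity argument transfers the stability LMI from the true system to the iterates. \Cref{thm:minimax-regret-bound} can then be applied as stated, with the explicit rate substituted into its estimation-error term.

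For (i), under \eqref{eq:pe} telescoping yields $\lambda_{\min}(G_{N,\lambda})\geq \lambda+(N-n_0)\underline{\phi}^2=\Omega(N)$ and $\det(G_{N,\lambda})\leq (\lambda+\overline{\phi}^2 N)^d=\cO(N^d)$, so directly from the definition of $\beta_N(\delta)$ one gets $\beta_N(\delta)^2=\cO(\log(N^{d/2}/\delta))$. On the probability-$(1-\delta)$ event $\cE$ of \Cref{thm:confidence_ellipsoid} (which, in its self-normalised form, holds uniformly in $N$),
$$
\|\theta_{N,\lambda}-\theta\|_2^2\;\leq\;\frac{\beta_N(\delta)^2}{\lambda_{\min}(G_{N,\lambda})}\;=\;\cO\!\left(\frac{\log(N^{d/2}/\delta)}{N}\right).
$$
Lipschitzness of $\theta\mapsto A(\theta)$ from \Cref{assumpt:structure} then gives $A_N\to A(\theta)$ at the same rate on $\cE$.

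For (ii), fix a triple $(P,Q_0,\rho)$ witnessing the relaxed LMI at $A(\theta)$. The $2{\times}2$ block matrix is affine in $A$ and strict negative definiteness is open, so there is a perturbation margin $\mu>0$ (depending on $P,Q_0,\rho,D$ only) such that any $A$ with $\|A-A(\theta)\|\leq\mu$ is still feasible; a direct estimate gives $\mu \geq \lambda_{\min}(-M(A(\theta)))/(2\|P\|)$ where $M(A)$ denotes the block matrix. Combining with the rate from (i), the margin is attained whenever $N$ exceeds a \emph{deterministic} threshold $N_0=N_0(\mu,\underline{\phi},\overline{\phi},d,\delta)$. Hence on the same event $\cE$, $(P,Q_0,\rho)$ is a feasible witness for every $A_N$ with $N>N_0$, which is precisely condition~2 of \Cref{thm:minimax-regret-bound}. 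Invoking that theorem and substituting the explicit estimation rate yields the stated bound; asymptotic near-optimality follows by letting $N,K\to\infty$.

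The delicate part is step (ii): condition~2 of \Cref{thm:minimax-regret-bound} has to be verified on iterates $A_N$ produced from random data, so the event on which it activates and the event $\cE$ of the confidence ellipsoid must be identified, lest an additional $\delta$ be paid through a union bound. This is exactly what the uniform-in-$N$ version of the self-normalised inequality and the purely deterministic choice of $N_0$ buy us; once this identification is made, the remainder is a cosmetic rewriting of \Cref{thm:minimax-regret-bound}'s conclusion using the explicit rate from (i).
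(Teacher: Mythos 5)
Your proposal is correct and follows essentially the same route as the paper: the PE condition gives $\lambda_{\min}(G_{N,\lambda})=\Omega(N)$ and $\beta_N(\delta)^2=\cO(\log(N^{d/2}/\delta))$, hence the explicit estimation rate, and the consistency $A_N\to A(\theta)$ transfers the LMI from the true system to the iterates for $N$ large enough (the paper argues via continuity of eigenvalues of the converging block matrices, where you use an equivalent openness/perturbation-margin argument). Your additional care in identifying the uniform-in-$N$ confidence event with the event on which condition~2 activates is a valid refinement of a point the paper leaves implicit, but it does not change the substance of the argument.
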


\section{Multi-model Selection}
\label{sec:multi-model}

The procedure we developed in \Cref{sec:estimation,sec:prediction,sec:control} relies on strong modelling assumptions, such as the linear dynamics \eqref{eq:dynamics} and \Cref{assumpt:structure}. But what if they are wrong?

\paragraph{Model adequacy}
One of the major benefits of using the family of linear models, compared to richer model classes, is that they provide strict conditions allowing to quantify the adequacy of the modelling assumptions to the observations.
Given $N-1$ observations, \Cref{sec:estimation} provides a polytopic confidence region \eqref{eq:polytope} that contains $A(\theta)$ with probability at least $1-\delta$. Since the dynamics are linear, we can propagate this confidence region to the next observation: $y_{N}$ must belong to the Minkowski sum of a polytope representing model uncertainty $\cP(A_{0} x_N + Bu_N, \Delta A_{1}x_N,\dots, \Delta A_{2^d}x_N)$ and a polytope $\cP(0_p, \underline{\eta}, \overline{\eta})$ bounding the disturbance and measurement noises. \citet{delos2015} provide a way to test this membership in polynomial time using linear programming. Whenever it is not verified, we can confidently reject the $(A,\phi)$-modelling \cref{assumpt:structure}. This enables us to consider a rich set of potential features $\left((A^1, \phi^1), \dots, (A^M, \phi^M)\right)$ rather than relying on a single assumption, and only retain those that are consistent with the observations so far. Then, every remaining hypothesis must be considered during planning.

\paragraph{Robust selection}
We temporarily ignore the parametric uncertainty on $\theta$ to simply consider several candidate dynamics models, which all correspond to different modelling assumptions. We also restrict to deterministic dynamics, which is the case of \eqref{eq:interval-predictor}.

\begin{assumption}[Multi-model ambiguity]
\label{assumpt:multi-model-ambiguity}
The dynamics $f$ lie in a finite set of candidates $(f^m)_{m\in[M]}$.
%$
%\exists m\in[M]: \dot{x}(t) = f^m(x(t), u(t)), \forall t\geq 0.
%$
\end{assumption}
We adapt our planning algorithm to balance these concurrent hypotheses in a robust fashion, i.e. maximise a robust objective with discrete ambiguity:
\begin{equation}
\label{eq:robust-objective-discrete}
V^r = \sup_{a\in\cA^\Natural}\min_{m\in[M]} \sum_{n=N+1}^\infty \gamma^n R_n^m, 
\end{equation}
where $R_n^m$ is the reward obtained by following the action sequence $a$ up to step $n$ under the dynamics $f^m$.
This objective could be optimised in the same way as in \Cref{sec:control}, but this would result in a coarse and lossy approximation. Instead, we exploit the finite uncertainty structure of \Cref{assumpt:multi-model-ambiguity} to asymptotically recover the true $V^r$ by modifying the \texttt{OPD} algorithm in the following way:

\begin{definition}[Robust UCB] We replace the upper-bound \eqref{eq:opd} on sequence values in \texttt{OPD} by:
\begin{equation}
\label{eq:robust-b-values}
B_a^r(k)  \eqdef \min_{m\in[M]} \sum_{n=0}^{h-1} \gamma^n R_n^m  + \frac{\gamma^h}{1-\gamma}. 
\end{equation}
\end{definition}
Note that it is not equivalent to solving each control problem independently and following the action with highest worst-case value, as we show in the Supplementary Material. We analyse the sample complexity of the corresponding robust planning algorithm.

%\begin{figure}
%\centering
%\includegraphics[width=0.3\linewidth]{}
%\caption{The computation of robust B-values in \eqref{eq:robust-b-values}. The simulation of trajectories for every dynamics model $f^m$ is represented as stacked versions of the expanded tree $\mathcal{T}_k$.}
%\label{fig:drop}
%\end{figure}

\begin{proposition}[Robust planning performance]
\label{theorem:drop-regret}
The robust version of \texttt{OPD} \eqref{eq:robust-b-values} enjoys the same regret bound as \texttt{OPD} in \Cref{theorem:opd-regret}, with respect to the multi-model objective \eqref{eq:robust-objective-discrete}.
\end{proposition}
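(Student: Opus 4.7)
The plan is to show that swapping \texttt{OPD}'s B-values for the robust variant \eqref{eq:robust-b-values} preserves the two structural properties on which the analysis of \citet{Hren2008} relies, so that the proof of \Cref{theorem:opd-regret} can be transported almost verbatim, only with the near-optimality measure $\kappa$ redefined with respect to the robust value $V^r$ of \eqref{eq:robust-objective-discrete} instead of $\hat{V}^r$.

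First I would establish the optimistic upper-bound property: for every prefix $a\in\cA^h$ and every infinite completion $\bar{a}$ of $a$,
$$
V^r(\bar{a}) \;=\; \min_{m\in[M]}\sum_{n=0}^\infty \gamma^n R_n^m(\bar{a}) \;\leq\; \min_{m\in[M]}\sum_{n=0}^{h-1}\gamma^n R_n^m(a) \;+\; \frac{\gamma^h}{1-\gamma} \;=\; B_a^r(k).
$$
The key step is the elementary inequality $\min_m(A_m + B_m) \leq \min_m A_m + \max_m B_m$ applied to the split $A_m = \sum_{n<h}\gamma^n R_n^m$ and $B_m = \sum_{n\geq h}\gamma^n R_n^m$, together with $R_n^m\in[0,1]$ to bound $\max_m B_m \leq \gamma^h/(1-\gamma)$. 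In particular, at every iteration $k$ the optimal infinite path crosses a leaf of the current tree $\cT_k$, so $\max_{a\in\cL_k} B_a^r(k) \geq V^r(a^\star)$.

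I would then replay the Hren--Munos argument. Since the expansion rule picks the leaf $a_k$ of largest $B_a^r(k)$, the upper-bound property forces $V^r(a_k) \geq V^r(a^\star) - \gamma^{h(a_k)}/(1-\gamma)$, so every expanded leaf belongs to the near-optimal set
$$
\cI_h \;\eqdef\; \left\{a\in\cA^h:\; V^r(a)\geq V^r(a^\star) - \frac{\gamma^{h+1}}{1-\gamma}\right\},
$$
whose cardinality grows at most as $\kappa^h$ by definition of the branching factor. Summing $|\cI_h|$ over depths gives a minimum expanded-tree depth of order $\log K / \log \kappa$ after $K$ expansions, from which the $\cO(K^{-\log(1/\gamma)/\log \kappa})$ simple regret follows exactly as in \citet{Hren2008}.

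I expect the first step, namely the $\min$--sum exchange, to be the only non-routine point; once it is settled, the rest is a line-by-line copy of the original proof, because the robust tail $\gamma^h/(1-\gamma)$ retained in \eqref{eq:robust-b-values} decays geometrically at the same rate as the single-model tail, and \Cref{assumpt:multi-model-ambiguity} only affects the per-expansion cost (requiring $M$ forward simulations) rather than the combinatorial structure of the tree on which the regret bound depends.
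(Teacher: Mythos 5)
Your proposal is correct and follows essentially the same route as the paper: establish that the robust B-values (and, implicitly, the robust U-values) bracket the robust value $V^r_a$ exactly as in the single-model case — the paper's Lemma on robust values ordering does precisely your $\min$--sum exchange by bounding each model's tail by $\gamma^h/(1-\gamma)$ before taking the minimum — and then replay the Hren--Munos depth-counting argument with $\kappa$ redefined for the robust near-optimal set. The only thing left implicit in your write-up is the companion lower bound $V^r_a \geq U^r_a(k) = B^r_a(k) - \gamma^{h(a)}/(1-\gamma)$ (immediate from non-negativity of rewards), which you use when concluding that every expanded leaf is near-optimal.
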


This result is of independent interest: the solution of a robust objective \eqref{eq:robust-objective-discrete} with discrete ambiguity $f\in\{f^m\}_{m\in[M]}$ can be recovered exactly, asymptotically when the planning budget $K$ goes to infinity, which Robust DP algorithms do not allow. This also contrasts with the results obtained in \Cref{sec:control} for the robust objective \eqref{eq:robust-control} with continuous ambiguity $A(\theta)\in\cC_{N,\delta}$, for which \texttt{OPD} only recovers the surrogate approximation $\hat{V}^r$, as discussed in \Cref{thm:minimax-regret-bound}. Note that here the regret depends on the number $K$ of node expansions, but each expansion now requires $M$ times more simulations than in the single-model setting. Finally, the two approaches of Sections \ref{sec:control} and \ref{sec:multi-model} can be merged by using the pessimistic reward \eqref{eq:surrogate-objective} in \eqref{eq:robust-b-values}.

\section{Experiments}
\label{sec:experiments}

Videos and code are available at \url{https://eleurent.github.io/robust-beyond-quadratic/}.

\paragraph{Obstacle avoidance with unknown friction}
We first consider a simple illustrative example, shown in \Cref{fig:prediction}: the control of a 2D system %with position $(p_x,p_y)$ and velocity $(v_x, v_y)$ 
moving by means of a force $(u_x, u_y)$ in an medium with anisotropic linear friction with unknown coefficients $(\theta_x, \theta_y)$.
%$
%\begin{bmatrix}
%\dot{p_x}\\
%\dot{p_y}\\
%\dot{v_x}\\
%\dot{v_y}\\
%\end{bmatrix} = 
%\begin{bmatrix}
%0 & 0 & 1 & 0 \\
%0 & 0 & 0 & 1 \\
%0 & 0 & -\theta_x & 0 \\
%0 & 0 & 0 & -\theta_y
%\end{bmatrix}
%\begin{bmatrix}
%{p_x}\\
%{p_y}\\
%{v_x}\\
%{v_y}\\
%\end{bmatrix}
%+
%\begin{bmatrix}
%0\\
%0\\
%{u_x}\\
%{u_y}\\
%\end{bmatrix}.
%$
The reward encodes the task of navigating to reach a goal state $x_g$ while avoiding collisions with obstacles: $R(x) = \delta(x)/(1 + \|x - x_g\|_2)$  where $\delta(x)$ is $0$ whenever $x$ collides with an obstacle, $1$ otherwise. The actions $\cA$ are constant controls in the up, down, left and right directions.
For the reasons mentioned above, no robust baseline applies to our setting. We compare \Cref{alg:full} to the non-robust adaptive control approach that plans with the estimated dynamics $\theta_{N,\lambda}$, and thus enjoys the same prior knowledge of dynamics structure and reward. This highlights the benefits of the robust formulation solely rather than stemming from algorithm design.
We show in \Cref{tab:obstacle} the results of 100 simulations of a single episode: the robust agent performs worse than the nominal agent on average, but manages to ensure safety while the nominal agent collides with obstacles in $4\%$ of simulations. We also compare to a standard model-free approach, DQN, which does not benefit from the prior knowledge on the system dynamics, and is instead trained over multiple episodes. The reported performance is that of the final policy obtained after training for 3000 episodes, during which $897\pm64$ collisions occurred ($29.9\pm2.1$\%).
We study the evolution of the suboptimality $V(x_N) - \sum_{n>N}\gamma^{n-N}R(x_n)$ with respect to the number of samples $N$, by comparing the empirical returns from a state $x_N$ to the value $V(x_N)$ that the agent would get by acting optimally from $x_N$ with knowledge of the dynamics. Although the assumptions of \Cref{thm:minimax-regret-bound} are not satisfied (e.g. non-smooth reward), the mean suboptimality of the robust agent, shown in \Cref{fig:regret}, still decreases polynomially with $N$: \Cref{alg:full} gets \emph{more efficient} as it is \emph{more confident} while \emph{ensuring safety} at all times. In comparison, the nominal agent enjoys a smaller suboptimality on average, but higher in the worst-case. 

\begin{figure}[!tbp]
	\centering
	\begin{minipage}[t]{0.43\textwidth}
		\vspace{0pt}
		\centering
		\includegraphics[trim={0 0.2cm 0 0.2cm}, clip, width=0.9\linewidth]{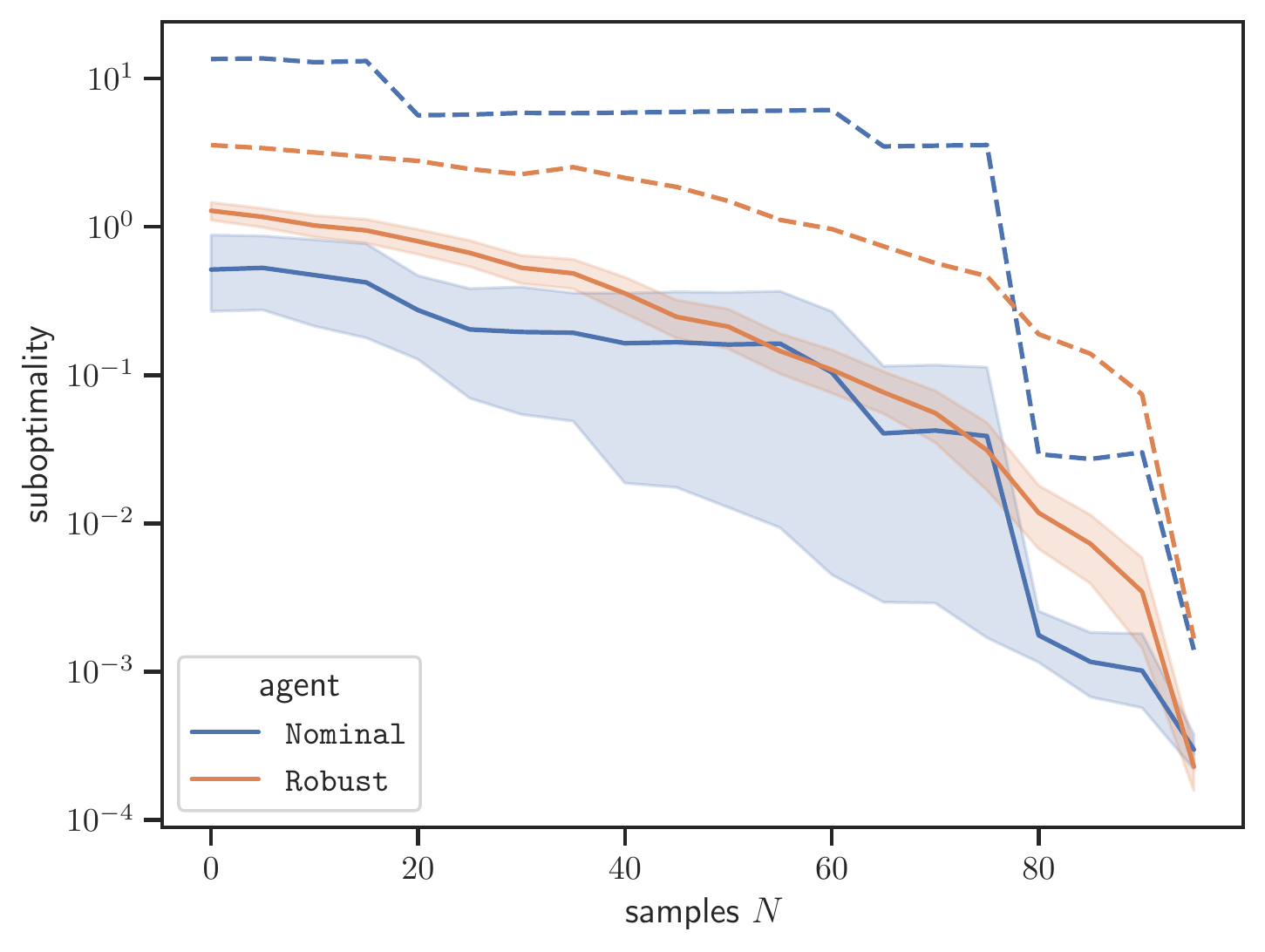}
		\caption{The mean (solid), $95\%$ CI for the mean (shaded) and maximum (dashed) suboptimality with respect to $N$.}
		\label{fig:regret}
	\end{minipage}
	\hfill
	\begin{minipage}[t]{0.55\textwidth}
		\vspace{0pt}
		\centering
		\includegraphics[width=0.68\linewidth]{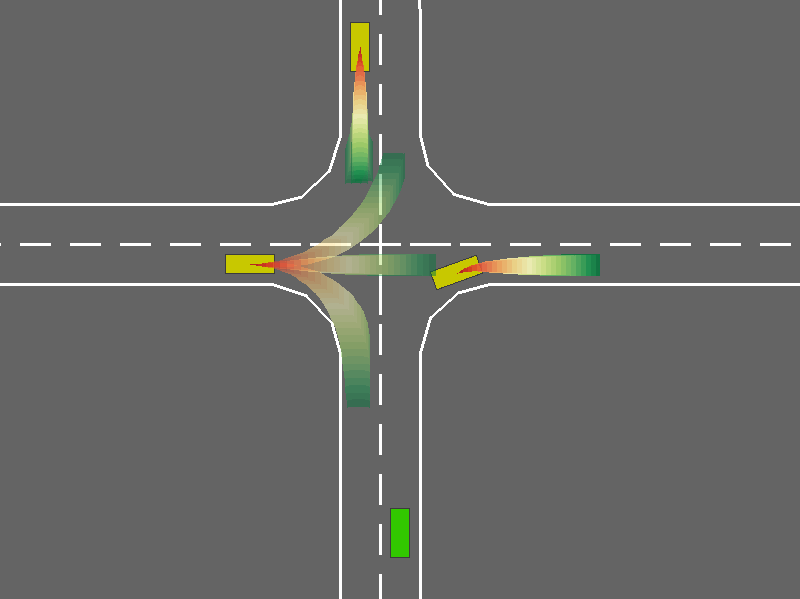}
		\caption{The intersection crossing task. Trajectory intervals show behavioural uncertainty for each vehicle, with a multi-model assumption over their route.}
	\end{minipage}
\end{figure}

\begin{table}[tbp]
\caption{Frequency of collision, minimum and average return achieved on a single episode, repeated with 100 random seeds. In both tasks, the robust agent performs worse than the nominal agent on average, but manages to ensure safety and attains a better worst-case performance.}
\subfigure[Performances on the obstacle task]{
	\centering
	\label{tab:obstacle}
	\begin{tabular}{lccc}
		\toprule
		Performance &
		failures &
		min &
		avg $\pm$ std  \\
		\midrule
		Oracle & 0\% & {11.6} & {$14.2 \pm 1.3$} \\
		\midrule
		{Nominal} & {4\%} & {2.8} & \textbf{$\mathbf{13.8} \pm 2.0$} \\
		\Cref{alg:full} & \textbf{0\%} & \textbf{10.4} & {$13.0 \pm 1.5$} \\
		\midrule
		DQN (trained) & 6\% & 1.7 & $12.3\pm2.5$ \\
		\bottomrule
	\end{tabular}
}
\subfigure[Performances on the driving task]{
	\label{tab:driving}
	\centering
	\begin{tabular}{lccc}
		\toprule
		Performance &
		failures &
		min &
		avg $\pm$ std  \\
		\midrule
		Oracle & 0\% & {6.9} & $7.4 \pm 0.5$ \\
		\midrule
		{Nominal 1} & 4\% & {5.2} & $\mathbf{7.3} \pm 1.5$ \\
		{Nominal 2} & 33\% & {3.5} & $6.4 \pm 0.3$ \\
		\Cref{alg:full} & \textbf{0\%} & \textbf{6.8} & $7.1 \pm 0.3$ \\
		\midrule
		DQN (trained) & 3\% & 5.4 & $6.3\pm0.6$ \\
		\bottomrule
	\end{tabular}
}
\vspace*{-0.5cm}
\end{table}

\textbf{Motion planning for an autonomous vehicle}~~
We consider the \href{https://github.com/eleurent/highway-env}{highway-env} environment \citep{highway-env} for simulated driving decision problems. An autonomous vehicle with state $\chi_0\in\Real^4$ is approaching an intersection among $V$ other vehicles with states $\chi_i\in\Real^4$, resulting in a joint traffic state $x = [\chi_0, \dots,\chi_V]^\top\in\Real^{4V+4}$. These vehicles follow parametrized behaviours $\dot{\chi}_i=f_i(x,\theta_i)$ with unknown parameters $\theta_i\in\Real^5$. We appreciate a first advantage of the structure imposed in \Cref{assumpt:structure}: the uncertainty space of $\theta$ is $\Real^{5V}$. In comparison, the traditional LQ setting where the whole state matrix $A$ is estimated would have resulted in a much larger parameter space $\theta\in\Real^{16V^2}$.
%This allows to scale to larger systems: in our experiments, we used a state space of dimension $44$ ($V=10$) where e.g. \citep{Dean2018,abeille18a} reported numerical experiments with states of dimensions 3 and 4, respectively.
The system dynamics $f$, which describes the interactions between vehicles, can only be expressed in the form of \Cref{assumpt:structure} given the knowledge of the desired route for each vehicle, with features $\phi$ expressing deviations to the centerline of the followed lane. Since these intentions are unknown to the agent, we adopt the multi-model perspective of \Cref{sec:multi-model} and consider one model per possible route for every observed vehicle before an intersection. In \Cref{tab:driving}, we compare \Cref{alg:full} to a nominal agent planning with two different modelling assumptions: Nominal 1 has access to the true followed route for each vehicle, while Nominal 2 does not and picks the model with minimal prediction error. Again we also compare to a DQN baseline trained over 3000 episodes, causing $1058\pm113$ collisions while training ($35\pm4\%$). As before, the robust agent has a higher worst-case performance and avoids collisions at all times, at the price of a decreased average performance..

\section*{Conclusion}

We present a framework for the robust estimation, prediction and control of a partially known linear system with generic costs. Leveraging tools from linear regression, interval prediction, and tree-based planning, we guarantee the predicted performance and provide a suboptimality bound. The method applicability is further improved by a multi-model extension and demonstrated on two simulations.

\clearpage

\section*{Broader Impact}

The motivation behind this work is to enable the development of Reinforcement Learning solutions for industrial applications, when it has been mainly limited to simulated games so far. In particular, many industries already rely on non-adaptive control systems and could benefit from an increased efficiency, including Oil and Gas, robotics for industrial automation, Data Center cooling, etc. But more often than not, safety-critical constraints proscribe the use of exploration, and industrials are reluctant to turn to learning-based methods that lack accountability. This work addresses these concerns by focusing on risk-averse decisions and by providing worst-case guarantees. Note however that these guarantees are only as good as the validity of the underlying hypotheses, and \Cref{assumpt:structure} in particular should be submitted to a comprehensive validation procedure; otherwise, decisions formed on a wrong basis could easily lead to dramatic consequences in such critical settings.
Beyond industrial perspectives, this work could be of general interest for risk-averse decision-making. For instance, parametrized epidemiological models have been used to represent the propagation of Covid\nobreakdash-19 and study the impact of lockdown policies. These model parameters are estimated from observational data and corresponding confidence intervals are often available, but rarely used in the decision-making loop. In contrast, our approach would enable evaluating and optimising the worst-case outcome of such public policies.

\begin{ack}
	This work was supported by the French Ministry of Higher Education and Research, and CPER Nord-Pas de Calais/FEDER DATA Advanced data science and technologies 2015-2020.
\end{ack}

\bibliography{references}

\begin{thebibliography}{44}
\providecommand{\natexlab}[1]{#1}
\providecommand{\url}[1]{\texttt{#1}}
\expandafter\ifx\csname urlstyle\endcsname\relax
  \providecommand{\doi}[1]{doi: #1}\else
  \providecommand{\doi}{doi: \begingroup \urlstyle{rm}\Url}\fi

\bibitem[Abbasi-Yadkori \& Szepesvári(2011)Abbasi-Yadkori and
  Szepesvári]{abbasi-yadkori11a}
Abbasi-Yadkori, Y. and Szepesvári, C.
\newblock Regret bounds for the adaptive control of linear quadratic systems.
\newblock In Kakade, S.~M. and von Luxburg, U. (eds.), \emph{Proceedings of the
  24th Annual Conference on Learning Theory}, volume~19 of \emph{Proceedings of
  Machine Learning Research}, pp.\  1--26, Budapest, Hungary, 09--11 Jun 2011.
  PMLR.

\bibitem[Abbasi-Yadkori et~al.(2011)Abbasi-Yadkori, P\'{a}l, and
  Szepesv\'{a}ri]{Abbasi2011}
Abbasi-Yadkori, Y., P\'{a}l, D., and Szepesv\'{a}ri, C.
\newblock Improved algorithms for linear stochastic bandits.
\newblock In Shawe-Taylor, J., Zemel, R.~S., Bartlett, P.~L., Pereira, F., and
  Weinberger, K.~Q. (eds.), \emph{Advances in Neural Information Processing
  Systems 24}, pp.\  2312--2320. Curran Associates, Inc., 2011.

\bibitem[Abeille \& Lazaric(2018)Abeille and Lazaric]{abeille18a}
Abeille, M. and Lazaric, A.
\newblock Improved regret bounds for thompson sampling in linear quadratic
  control problems.
\newblock In Dy, J. and Krause, A. (eds.), \emph{Proceedings of the 35th
  International Conference on Machine Learning}, volume~80 of \emph{Proceedings
  of Machine Learning Research}, pp.\  1--9, Stockholmsmässan, Stockholm
  Sweden, 10--15 Jul 2018. PMLR.

\bibitem[Adetola et~al.(2009)Adetola, DeHaan, and Guay]{Adetola2009}
Adetola, V., DeHaan, D., and Guay, M.
\newblock {Adaptive model predictive control for constrained nonlinear
  systems}.
\newblock \emph{Systems and Control Letters}, 2009.
\newblock ISSN 01676911.
\newblock \doi{10.1016/j.sysconle.2008.12.002}.

\bibitem[Amos et~al.(2018)Amos, Rodriguez, Sacks, Boots, and {Zico
  Kolter}]{Amos2018}
Amos, B., Rodriguez, I. D.~J., Sacks, J., Boots, B., and {Zico Kolter}, J.
\newblock {Differentiable MPC for end-to-end planning and control}.
\newblock In \emph{Advances in Neural Information Processing Systems}, 2018.

\bibitem[Aswani et~al.(2013)Aswani, Gonzalez, Sastry, and Tomlin]{Aswani2013}
Aswani, A., Gonzalez, H., Sastry, S.~S., and Tomlin, C.
\newblock {Provably safe and robust learning-based model predictive control}.
\newblock \emph{Automatica}, 2013.
\newblock ISSN 00051098.
\newblock \doi{10.1016/j.automatica.2013.02.003}.

\bibitem[Basar \& Bernhard(1996)Basar and Bernhard]{Basar1996}
Basar, T. and Bernhard, P.
\newblock \emph{{H infinity - Optimal Control and Related Minimax Design
  Problems: A Dynamic Game Approach}}, volume~41.
\newblock 1996.

\bibitem[Ben-Tal et~al.(2009)Ben-Tal, El~Ghaoui, and Nemirovski]{Bental2009}
Ben-Tal, A., El~Ghaoui, L., and Nemirovski, A.
\newblock \emph{Robust optimization}, volume~28.
\newblock Princeton University Press, 2009.

\bibitem[Bertsimas et~al.(2011)Bertsimas, Brown, and Caramanis]{Bertsimas2011}
Bertsimas, D., Brown, D.~B., and Caramanis, C.
\newblock Theory and applications of robust optimization.
\newblock \emph{SIAM review}, 53\penalty0 (3):\penalty0 464--501, 2011.

\bibitem[Busoniu et~al.(2018)Busoniu, Pall, and Munos]{Busoniu2018}
Busoniu, L., Pall, E., and Munos, R.
\newblock Continuous-action planning for discounted infinite-horizon nonlinear
  optimal control with lipschitz values.
\newblock \emph{Automatica}, 92:\penalty0 100--108, 06 2018.

\bibitem[Dean et~al.(2017)Dean, Mania, Matni, Recht, and Tu]{Dean2017}
Dean, S., Mania, H., Matni, N., Recht, B., and Tu, S.
\newblock On the sample complexity of the linear quadratic regulator.
\newblock \emph{ArXiv}, abs/1710.01688, 2017.

\bibitem[Dean et~al.(2018)Dean, Mania, Matni, Recht, and Tu]{Dean2018}
Dean, S., Mania, H., Matni, N., Recht, B., and Tu, S.
\newblock Regret bounds for robust adaptive control of the linear quadratic
  regulator.
\newblock In Bengio, S., Wallach, H., Larochelle, H., Grauman, K.,
  Cesa-Bianchi, N., and Garnett, R. (eds.), \emph{Advances in Neural
  Information Processing Systems 31}, pp.\  4188--4197. Curran Associates,
  Inc., 2018.

\bibitem[Delos \& Teissandier(2015)Delos and Teissandier]{delos2015}
Delos, V. and Teissandier, D.
\newblock {Minkowski Sum of Polytopes Defined by Their Vertices}.
\newblock \emph{{Journal of Applied Mathematics and Physics (JAMP)}},
  3\penalty0 (1):\penalty0 62--67, January 2015.

\bibitem[Efimov et~al.(2012)Efimov, Fridman, Ra\"issi, Zolghadri, and
  Seydou]{Efimov2012}
Efimov, D., Fridman, L., Ra\"issi, T., Zolghadri, A., and Seydou, R.
\newblock Interval estimation for {LPV} systems applying high order sliding
  mode techniques.
\newblock \emph{Automatica}, 48:\penalty0 2365--2371, 2012.

\bibitem[Efimov et~al.(2013)Efimov, Ra\"issi, Chebotarev, and
  Zolghadri]{Efimov2013}
Efimov, D., Ra\"issi, T., Chebotarev, S., and Zolghadri, A.
\newblock Interval state observer for nonlinear time varying systems.
\newblock \emph{Automatica}, 49\penalty0 (1):\penalty0 200--205, 2013.

\bibitem[Faradonbeh et~al.(2017)Faradonbeh, Tewari, and
  Michailidis]{Faradonbeh2017}
Faradonbeh, M. K.~S., Tewari, A., and Michailidis, G.
\newblock Finite time analysis of optimal adaptive policies for
  linear-quadratic systems.
\newblock \emph{CoRR}, abs/1711.07230, 2017.

\bibitem[Fukushima et~al.(2007)Fukushima, Kim, and Sugie]{Fukushima2007}
Fukushima, H., Kim, T.~H., and Sugie, T.
\newblock {Adaptive model predictive control for a class of constrained linear
  systems based on the comparison model}.
\newblock \emph{Automatica}, 2007.
\newblock ISSN 00051098.
\newblock \doi{10.1016/j.automatica.2006.08.026}.

\bibitem[Gorissen et~al.(2015)Gorissen, İhsan Yanıkoğlu, and den
  Hertog]{Gorissen2015}
Gorissen, B.~L., İhsan Yanıkoğlu, and den Hertog, D.
\newblock A practical guide to robust optimization.
\newblock \emph{Omega}, 53:\penalty0 124 -- 137, 2015.

\bibitem[Hren \& Munos(2008)Hren and Munos]{Hren2008}
Hren, J.-F. and Munos, R.
\newblock {Optimistic planning of deterministic systems}.
\newblock In \emph{{European Workshop on Reinforcement Learning}}, pp.\
  151--164, France, 2008.

\bibitem[Ibrahimi et~al.(2013)Ibrahimi, Javanmard, and Roy]{Ibrahimi2013}
Ibrahimi, M., Javanmard, A., and Roy, B.
\newblock Efficient reinforcement learning for high dimensional linear
  quadratic systems.
\newblock \emph{Advances in Neural Information Processing Systems}, 4, 03 2013.

\bibitem[Iyengar(2005)]{Iyengar2005}
Iyengar, G.~N.
\newblock {Robust Dynamic Programming}.
\newblock \emph{Mathematics of Operations Research}, 30:\penalty0 257--280,
  2005.

\bibitem[{Köhler} et~al.(2019){Köhler}, {Andina}, {Soloperto}, {Müller}, and
  {Allgöwer}]{Kohler2019}
{Köhler}, J., {Andina}, E., {Soloperto}, R., {Müller}, M.~A., and
  {Allgöwer}, F.
\newblock Linear robust adaptive model predictive control: Computational
  complexity and conservatism.
\newblock In \emph{2019 IEEE 58th Conference on Decision and Control (CDC)},
  pp.\  1383--1388, 2019.

\bibitem[Kumar \& Jerome(2013)Kumar and Jerome]{Kumar2013}
Kumar, E.~V. and Jerome, J.
\newblock Robust lqr controller design for stabilizing and trajectory tracking
  of inverted pendulum.
\newblock \emph{Procedia Engineering}, 64:\penalty0 169 -- 178, 2013.
\newblock International Conference on Design and Manufacturing.

\bibitem[Lenz et~al.(2015)Lenz, Knepper, and Saxena]{Lenz2015}
Lenz, I., Knepper, R.~A., and Saxena, A.
\newblock Deepmpc: Learning deep latent features for model predictive control.
\newblock In \emph{Robotics: Science and Systems}, 2015.

\bibitem[Leurent(2018)]{highway-env}
Leurent, E.
\newblock An environment for autonomous driving decision-making.
\newblock \url{https://github.com/eleurent/highway-env}, 2018.

\bibitem[Leurent \& Mercat(2019)Leurent and Mercat]{leurent2019social}
Leurent, E. and Mercat, J.
\newblock Social attention for autonomous decision-making in dense traffic.
\newblock In \emph{Machine Learning for Autonomous Driving Workshop at NeurIPS
  2019}, 2019.

\bibitem[Leurent et~al.(2019)Leurent, Efimov, Ra\"issi, and
  Perruquetti]{leurent2019interval}
Leurent, E., Efimov, D., Ra\"issi, T., and Perruquetti, W.
\newblock Interval prediction for continuous-time systems with parametric
  uncertainties.
\newblock In \emph{Proc. IEEE Conference on Decision and Control (CDC)}, Nice,
  2019.

\bibitem[Leurent et~al.(2020)Leurent, Efimov, and Maillard]{Leurent2020robust}
Leurent, E., Efimov, D., and Maillard, O.-A.
\newblock {Robust-Adaptive Interval Predictive Control for Linear Uncertain
  Systems}.
\newblock In \emph{2020 IEEE 59th Conference on Decision and Control (CDC)},
  Jeju Island, Republic of Korea, 8--11 Dec 2020.

\bibitem[Levine et~al.(2015)Levine, Finn, Darrell, and Abbeel]{Levine2015}
Levine, S., Finn, C., Darrell, T., and Abbeel, P.
\newblock End-to-end training of deep visuomotor policies.
\newblock \emph{CoRR}, abs/1504.00702, 2015.

\bibitem[Lorenzen et~al.(2017)Lorenzen, Allg{\"o}wer, and Cannon]{Lorenzen2017}
Lorenzen, M., Allg{\"o}wer, F., and Cannon, M.
\newblock Adaptive model predictive control with robust constraint
  satisfaction.
\newblock \emph{IFAC-PapersOnLine}, 50\penalty0 (1):\penalty0 3313--3318, Jul
  2017.
\newblock ISSN 2405-8963.

\bibitem[Lu \& Cannon(2019)Lu and Cannon]{Lu2019}
Lu, X. and Cannon, M.
\newblock {Robust adaptive tube model predictive control}.
\newblock In \emph{Proceedings of the American Control Conference}, 2019.
\newblock ISBN 9781538679265.

\bibitem[Lu \& Spurgeon(1997)Lu and Spurgeon]{Lu1997}
Lu, X.-Y. and Spurgeon, S.~K.
\newblock Robust sliding mode control of uncertain nonlinear systems.
\newblock \emph{Systems {\&} Control Letters}, 32\penalty0 (2):\penalty0
  75--90, Nov 1997.
\newblock ISSN 0167-6911.

\bibitem[Mnih et~al.(2015)Mnih, Kavukcuoglu, Silver, Rusu, Veness, Bellemare,
  Graves, Riedmiller, Fidjeland, Ostrovski, Petersen, Beattie, Sadik,
  Antonoglou, King, Kumaran, Wierstra, Legg, and Hassabis]{mnih2015humanlevel}
Mnih, V., Kavukcuoglu, K., Silver, D., Rusu, A.~A., Veness, J., Bellemare,
  M.~G., Graves, A., Riedmiller, M., Fidjeland, A.~K., Ostrovski, G., Petersen,
  S., Beattie, C., Sadik, A., Antonoglou, I., King, H., Kumaran, D., Wierstra,
  D., Legg, S., and Hassabis, D.
\newblock Human-level control through deep reinforcement learning.
\newblock \emph{Nature}, 518\penalty0 (7540):\penalty0 529--533, February 2015.

\bibitem[Nilim \& {El Ghaoui}(2005)Nilim and {El Ghaoui}]{Nilim2005}
Nilim, A. and {El Ghaoui}, L.
\newblock {Robust Control of Markov Decision Processes with Uncertain
  Transition Matrices}.
\newblock \emph{Operations Research}, 53:\penalty0 780--798, 2005.

\bibitem[Ouyang et~al.(2017)Ouyang, Gagrani, and Jain]{Ouyang2017}
Ouyang, Y., Gagrani, M., and Jain, R.
\newblock Learning-based control of unknown linear systems with thompson
  sampling.
\newblock \emph{CoRR}, abs/1709.04047, 2017.

\bibitem[Pe{\~n}a et~al.(2008)Pe{\~n}a, Lai, and Shao]{pena2008self}
Pe{\~n}a, V.~H., Lai, T.~L., and Shao, Q.-M.
\newblock \emph{Self-normalized processes: Limit theory and Statistical
  Applications}.
\newblock Springer Science \& Business Media, 2008.

\bibitem[Rosolia \& Borrelli(2019)Rosolia and Borrelli]{Rosolia2019}
Rosolia, U. and Borrelli, F.
\newblock {Sample-Based Learning Model Predictive Control for Linear Uncertain
  Systems}.
\newblock In \emph{Proceedings of the IEEE Conference on Decision and Control},
  2019.
\newblock ISBN 9781728113982.
\newblock \doi{10.1109/CDC40024.2019.9030270}.

\bibitem[Sastry et~al.(1990)Sastry, Bodson, and Bartram]{Sastry1990}
Sastry, S., Bodson, M., and Bartram, J.~F.
\newblock {Adaptive Control: Stability, Convergence, and Robustness}.
\newblock \emph{The Journal of the Acoustical Society of America}, 1990.
\newblock ISSN 0001-4966.
\newblock \doi{10.1121/1.399905}.

\bibitem[Schneider(1997)]{Schneider1997}
Schneider, J.
\newblock {Exploiting model uncertainty estimates for safe dynamic control
  learning}.
\newblock \emph{Advances in neural information processing systems}, pp.\
  1047----1053, 1997.

\bibitem[Silver et~al.(2018)Silver, Hubert, Schrittwieser, Antonoglou, Lai,
  Guez, Lanctot, Sifre, Kumaran, Graepel, Lillicrap, Simonyan, and
  Hassabis]{Silver1140}
Silver, D., Hubert, T., Schrittwieser, J., Antonoglou, I., Lai, M., Guez, A.,
  Lanctot, M., Sifre, L., Kumaran, D., Graepel, T., Lillicrap, T., Simonyan,
  K., and Hassabis, D.
\newblock A general reinforcement learning algorithm that masters chess, shogi,
  and go through self-play.
\newblock \emph{Science}, 362\penalty0 (6419):\penalty0 1140--1144, 2018.

\bibitem[Tanaskovic et~al.(2014)Tanaskovic, Fagiano, Smith, and
  Morari]{Tanaskovic2014}
Tanaskovic, M., Fagiano, L., Smith, R., and Morari, M.
\newblock {Adaptive receding horizon control for constrained MIMO systems}.
\newblock \emph{Automatica}, 2014.
\newblock ISSN 00051098.
\newblock \doi{10.1016/j.automatica.2014.10.036}.

\bibitem[Turchetta et~al.(2016)Turchetta, Berkenkamp, and
  Krause]{Turchetta2016}
Turchetta, M., Berkenkamp, F., and Krause, A.
\newblock {Safe exploration in finite Markov decision processes with Gaussian
  processes}.
\newblock In \emph{Advances in Neural Information Processing Systems}, 2016.

\bibitem[Weinstein \& Littman(2012)Weinstein and Littman]{Weinstein2012}
Weinstein, A. and Littman, M.
\newblock Bandit-based planning and learning in continuous-action markov
  decision processes.
\newblock \emph{Proceedings of the 22nd International Conference on Automated
  Planning and Scheduling}, pp.\  306--314, 01 2012.

\bibitem[Wiesemann et~al.(2013)Wiesemann, Kuhn, and Rustem]{Wiesemann2013}
Wiesemann, W., Kuhn, D., and Rustem, B.
\newblock {Robust Markov Decision Processes}.
\newblock \emph{Mathematics of Operations Research}, pp.\  1--52, 2013.

\end{thebibliography}
\bibliographystyle{biblio}

%\begin{comment}
\clearpage
\onecolumn
\appendix

\begin{center}
	\LARGE Supplementary Material
\end{center}

\paragraph{Outline}
In \Cref{sec:proof}, we provide a proof for every novel result introduced in this paper. \Cref{sec:experimental-setting} provides additional details on our experiments. \Cref{sec:tight-polytope} gives a better method of conversion from ellipsoid to polytope than that of \eqref{eq:polytope}. Finally, \Cref{sec:min-max-order} highlights the fact that robustness cannot be recovered by aggregating independent solutions to many optimal control problem. 

\section{Proofs}
\label{sec:proof}

\subsection{Proof of \Cref{prop:regularized_solution}}

\begin{proof}
We differentiate $J(\theta) = \sum_{n=1}^N \|y_n -\Phi_n\theta\|_{\Sigma_p^{-1}}^2 + \lambda\|\theta\|_{}^2$ as in \eqref{eq:regression_min} with respect to $\theta$:

\begin{align*}
    \nabla_{\theta} J(\theta) &= \sum_{n=1}^N\nabla_{\theta} (y_n - \Phi_n\theta)^\transp\Sigma_p^{-1}(y_n - \Phi_n\theta) + \nabla_{\theta} \lambda\|\theta\|_{}^2\\
    &= -2\sum_{n=1}^N y_n^\transp\Sigma_p^{-1}\Phi_n + 2\sum_{n=1}^N\theta^\transp(\Phi_n^\transp\Sigma^{-1}\Phi_n) +  2 \lambda \theta^\transp
\end{align*}

Hence,
\begin{align*}
    \nabla_{\theta} J(\theta) = 0 \iff \left(\sum_{n=1}^N\Phi_n^\transp\Sigma_p^{-1}\Phi_n + I_d\right)\theta = \sum_{n=1}^N y_n^\transp\Sigma_p^{-1}\Phi_n
\end{align*}
\end{proof}

\subsection{Proof of \Cref{thm:confidence_ellipsoid}}

We start by showing a preliminary proposition:

\begin{proposition}[Matrix version of Theorem 1 of \citealt{Abbasi2011}]
\label{prop:concentration}
Let $\{F_n\}_{n=0}$ be a filtration.
Let $\{\eta_n\}_{n=1}^\infty$ be a $\Real^p$-valued stochastic process such that $\eta_n$ is $F_n$-measurable and $\expectedvalue\left[\eta_n\condbar F_{n-1}\right]$ is $\Sigma_p$-sub-Gaussian.

Let $\{\Phi_n\}_{n=1}^\infty$ be an $\Real^{p\times d}$-valued stochastic process such that $\Phi_n$ is $F_n$-measurable. Assume that $G$ is a $d\times d$ positive definite matrix. For any $n\geq 0$, define:
\begin{equation*}
    \overline{G}_n = G + \sum_{s=1}^n \Phi_s^\transp \Sigma_p^{-1} \Phi_s \in \Real^{d\times d} \quad S_n = \sum_{s=1}^n \Phi_s^\transp\Sigma_p^{-1}\eta_s \in \Real^{d}.
\end{equation*}
Then, for any $\delta>0$, with probability at least $1-\delta$, for all $n\geq0$,
\begin{align*}
\| S_n \|_{\overline{G}_n^{-1}} \leq \sqrt{2\ln \left(\frac{\det\left(\overline{G}_n\right)^{1/2}}{\delta\det(G)^{1/2}}\right)}.
\end{align*}
\end{proposition}
\begin{proof}
Let 
\begin{equation*}
    G_t = \sum_{s=1}^t \Phi_s^\transp \Sigma_p^{-1} \Phi_s \in \Real^{d\times d}
\end{equation*}
And for any $z\in\Real^d$,
\begin{equation*}
    M_t^z = \exp{\left(\inp{z}{S_t} - \frac{1}{2}\|z\|_{G_t}\right)}
\end{equation*}
\begin{equation*}
    D_t^z = \exp{\left(\inp{\Phi_t z}{\eta_t}_{\Sigma_p^{-1}} - \frac{1}{2}\|\Phi_t z\|_{\Sigma_p^{-1}}\right)}
\end{equation*}
Then,
\begin{align*}
    M_t^z &= \exp{\left(\sum_{s=1}^t z^\transp \Phi_s^\transp \Sigma_p^{-1} \eta_s - \frac{1}{2} (\Phi_s z)^\transp\Sigma_p^{-1}(\Phi_s z) \right)} \\
    &= \prod_{s=1}^{t} D_s^z
\end{align*}
and using the Sub-Gaussianity of $\eta_t$
\begin{align*}
    \expectedvalue\left[D_t^z \condbar F_{t-1}\right] = {}& \exp{\left(- \frac{1}{2}\|\Phi_t z\|_{\Sigma_p^{-1}}\right)}\\ &\expectedvalue\left[\exp{\left(\inp{\Phi_t z}{\eta_t}_{\Sigma_p^{-1}}\right)} \condbar F_{t-1}\right]  \\
    \leq {} & \exp{\left(- \frac{1}{2}\|\Phi_t z\|_{\Sigma_p^{-1}}\right)}\\
    &\exp{\left((z^\transp \Phi_t^\transp \Sigma_p^{-1})\Sigma_p(\Sigma_p^{-1} \Phi_t z)\right)}\\
    &= 1
\end{align*}
\begin{align*}
    \expectedvalue\left[M_t^z \condbar F_{t-1}\right] = \left(\prod_{s=1}^{t-1} D_s^z\right) \expectedvalue\left[D_t^z \condbar F_{t-1}\right] \leq M_{t-1}^z
\end{align*}
Showing that $(M_t^z)_{t=1}^\infty$ is indeed a supermartingale and in fact $\expectedvalue[M_t^z]\leq 1$.
It then follows by Doob's upcrossing lemma for supermartingale that $M_\infty^z = \lim_{t\to\infty} M_t^z$ is almost surely well-defined, and so is $M_\tau^z$ for any random stopping time $\tau$.

Next, we consider the stopped martingale $M_{\min(\tau,t)}^z$. Since 
$(M_t^z)_{t=1}^\infty$ is a non-negative supermartingale and $\tau$ is a random stopping time, we deduce by Doob's decomposition that
\begin{align*}
\expectedvalue[M_{\min(\tau,t)}^z] &= \expectedvalue[M_0^z] + \expectedvalue[\sum_{s=0}^{t-1} (M_{s+1}^z-M_s^z) \mathbb{I}\{\tau>s\}]\\
&\leq 1 + \expectedvalue[\sum_{s=0}^{t-1} \expectedvalue[M_{s+1}^z-M_s^z|F_{s}] \mathbb{I}\{\tau>s\}]\\
&\leq 1
\end{align*}
Finally, an application of Fatou's lemma show that 
$\expectedvalue[M_\tau^z] = \expectedvalue[\liminf_{t\to\infty} M_{\min(\tau,t)}^z] \leq \liminf_{t\to\infty} \expectedvalue[M_{\min(\tau,t)}^z] \leq 1.$

This results allows to apply a result from \citep{pena2008self}:
\begin{lemma}[Theorem 14.7 of \citep{pena2008self}]
If $Z$ is a random vector and $B$ is a symmetric positive definite matrix such that
\[\forall \gamma\in\Real^d, \ln \expectedvalue \exp \left(\gamma^\transp Z -\frac{1}{2} \gamma^\transp B \gamma \right)\leq 0,\]
then for any positive definite non-random matrix C, it holds
\[\expectedvalue\left[ \sqrt{\frac{\det(C)}{\det(B+C)} } \exp\left( \frac{1}{2}\|Z\|^2_{(B+C)^{-1}}\right)\right]\leq 1. \] 
In particular, by Markov inequality, for all $\delta\in(0,1)$, 
\[\probability{\|Z\|_{(B+C)^{-1}} \geq \sqrt{2\ln \left(\frac{\det \left((B+C)^{1/2}\right)}{\delta\det(C)^{1/2}}\right)}}\leq \delta.\]
\end{lemma}

Here, by using $Z = \sum_{s=1}^t\Phi_s\Sigma_p^{-1}\eta_s$, $B=G_t$, $C=G$,

\[
\probability{\| S_t \|_{(G_t+G)^{-1}} \geq \sqrt{2\ln \left(\frac{\det(G_t+G)^{1/2}}{\delta\det(G)^{1/2}}\right)}} \leq \delta
\]

\end{proof}

Having shown this preliminary result, we move on to the proof of \Cref{thm:confidence_ellipsoid}.

\begin{proof}
For all $x\in\Real^d$, \eqref{eq:vector_rls} gives:
\begin{align*}
    x^\transp\theta_{N,\lambda}  -x^\transp\theta &= x^\transp G_{N, \lambda}^{-1}\sum_{n=1}^N \Phi_n^\transp \Sigma_p^{-1}\eta_n
    - \lambda x^\transp G_{N, \lambda}^{-1}\theta\\
    &= \inp{x}{\sum_{n=1}^N \Phi_n^\transp \Sigma_p^{-1}\eta_n}_{G_{N, \lambda}^{-1}} - \lambda\inp{x}{\theta}_{G_{N, \lambda}^{-1}}
\end{align*}

Using the Cauchy-Schwartz inequality, we get:
\begin{align*}
    |x^\transp\theta_{N,\lambda}  -x^\transp\theta| \leq {} & \|x\|_{G_{N, \lambda}^{-1}}\left(\left\|\sum_{n=1}^N \Phi_n^\transp \Sigma_p^{-1}\eta_n\right\|_{G_{N, \lambda}^{-1}}\right.\\ 
    &+ \left.\lambda\|\theta\|_{G_{N, \lambda}^{-1}}\right)
\end{align*}

In particular, for $x = G_{N,\lambda}(\theta_{N,\lambda} - \theta)$, we get after simplifying with $\| \theta_{N,\lambda}  - \theta\|_{G_{N,\lambda}}$:
\begin{align*}
    \| \theta_{N,\lambda}  - \theta\|_{G_{N,\lambda}} &\leq \left\|\sum_{n=1}^N \Phi_n^\transp \Sigma_p^{-1}\eta_n\right\|_{G_{N, \lambda}^{-1}} + \lambda\|\theta\|_{G_{N, \lambda}^{-1}}
\end{align*}

By applying \Cref{prop:concentration} with $G=\lambda I_d$, we obtain that with probability at least $1-\delta$,
\begin{align*}
    \| \theta_{N,\lambda}  - \theta\|_{G_{N,\lambda}} &\leq \sqrt{2\ln \left(\frac{\det(G_{N,\lambda})^{1/2}}{\delta\det(\lambda I_d)^{1/2}}\right)}
     + \lambda\|\theta\|_{G_{N, \lambda}^{-1}}
\end{align*}
And since $\|\theta\|_{G_{N, \lambda}^{-1}}^2 \leq 1/\lambda_{\min}(G_{N,\lambda})\|\theta\|_2^2 \leq 1/\lambda \|\theta\|_2^2$ and $\|\theta\|_2^2 \leq d\|\theta\|_\infty^2\leq d S^2$,
\begin{align*}
    \| \theta_{N,\lambda}  - \theta\|_{G_{N,\lambda}} &\leq \sqrt{2\ln \left(\frac{\det(G_{N,\lambda})^{1/2}}{\delta\det(\lambda I_d)^{1/2}}\right)}
     + (\lambda d)^{1/2}S
\end{align*}
\end{proof}

\subsection{Proof of \Cref{prop:lower-bound}}

\begin{proof}
	The predictor designed in \Cref{sec:prediction} verifies the inclusion property \eqref{eq:inclusion-property}. Thus, for sequence of controls $\bu$, any dynamics $A(\theta)\in C_{N,\delta}$, and disturbances $\underline{\omega} \leq \omega \leq \overline{\omega}$, the corresponding state at time $t_n$ is bounded by $\underline{x}_n \leq x_n \leq \overline{x}_n$, which implies that $R(x_n) \geq \min_{x\in[\underline{x}_n(\bu), \overline{x}_n(\bu)]}  R(x) = \underline{R}_n(\bu)$.
	
	Thus, by taking the min over $C_{N,\delta}$ and $[\underline{\omega}, \overline{\omega}]$, we also have for any sequence of controls $\bu$:
	\begin{align*}
	    V^r(\bu) &= \min_{\substack{A(\theta)\in C_{N,\delta}\\ \underline{\omega} \leq \omega \leq \overline{\omega}}} \sum_{n=N+1}^\infty \gamma^n R(x_n)\\
	    &\geq \sum_{n=N+1}^\infty \gamma^n \underline{R}_n(\bu)\\
	    &= \hat{V}^r(\bu)
	\end{align*}
	
	And  $V^r(\bu) \leq V(\bu) = \expectedvalue_\omega \left[\sum_{n=N+1}^\infty \gamma^n R(x_n)\right]$ by definition.
\end{proof}

\subsection{Proof of \Cref{thm:minimax-regret-bound}}
\label{sec:proof-minimax-regret-bound}
We first bound the model estimation error.
\begin{lemma}
	\label{lem:dynamics-est-bound}
		\[\|A(\theta) - A(\theta_{N,\lambda})\|_F = \cO\left(\sqrt{ {\frac{\beta_N(\delta)^2}{\lambda_{\min}(G_{N,\lambda})}}}\right) \]
\end{lemma}
\begin{proof}
	We have 
	\begin{align*}
	\|\theta - \theta_{N,\lambda}\|_{G_{N,\lambda}}^2 \geq \lambda_{\min}(G_{N,\lambda})\|\theta - \theta_{N,\lambda}\|_{2}^2
	\end{align*}
	And \eqref{eq:confidence-ellipsoid} gives
	\[\|\theta - \theta_{N,\lambda}\|_{G_{N,\lambda}}^2 = \cO(\beta_N(\delta)^2) \]
	
	Moreover, $A(\theta)$ belongs to a linear image of this $L^2$-ball. By writing a the $j^{th}$ column of a matrix $M$ as $M_j$, and its coefficient $i,j$ as $M_{i,j}$,
	\begin{align*}
	((A(\theta)-A(\theta_{N,\lambda}))^\transp (A(\theta) - A(\theta_{N,\lambda})))_{i,j}
	&= (\theta-\theta_{N,\lambda})^\transp\phi_{i}^{\transp}\phi_j(\theta-\theta_{N,\lambda}) \\
	&\leq \lambda_{\max}(\phi_{i}^{\transp}\phi_j) \|\theta - \theta_{N,\lambda}\|_{2}^2
	\end{align*}
	
	Thus, $\|A(\theta) - A(\theta_{N,\lambda})\|_F^2 = \Tr{\left[(A(\theta)-A(\theta_{N,\lambda}))^\transp (A(\theta) - A(\theta_{N,\lambda}))\right]} = \cO\left( {\frac{\beta_N(\delta)^2}{\lambda_{\min}(G_{N,\lambda})}}\right) $
	
\end{proof}

Then, we propagate this estimation error through the state prediction.

\begin{lemma}
		If there exist $P>0,Q_0\in\Real^{p\times p}$, $\rho>0$ such that
		\begin{align*}
		\begin{bmatrix}
		A_N^\transp P + P A_N + Q_0 & P|D|  \\
		|D|^\transp P & -\rho I_r \\
		\end{bmatrix}< 0,
		\end{align*}
		then for all $t> t_N$,
		\[\|\ox(t) - \ux(t)\| \leq \left(C_0 + \cO\left({\frac{\beta_N(\delta)}{\sqrt{\lambda_{\min}(G_{N,\lambda})}}} \right)\right)C_\omega(t), \]
		where $$C_0 = \sqrt{\frac{2\rho\lambda_{\max}(P)}{\lambda_{\min}(P)\lambda_{\min}(Q_0)}},$$ and $$C_\omega(t) = \sup_{\tau\in[0,t]} \|\overline{\omega}(\tau) - \underline{\omega}(\tau)\|_2.$$
\end{lemma}
\begin{proof}
	Let $e = \ox - \ux$. \eqref{eq:interval-predictor} gives the dynamics
	\begin{align*}
	\dot{e} = A_Ne + |\Delta A|(\ox^+ + \ux^-) + |D|(\overline{\omega} - \underline{\omega})
	\end{align*}
	where recall that $|M| = M^+ + M^-$ for any matrix $M\in\Real^{p\times p}$.
	
	We define the Lyapunov function $V = e^\transp P e$, which is non-negative definite provided that
	$
	P>0,
	$ and compute its derivative
	\begin{align*}
	\dot{V} ={}& X^\transp
	\begin{bmatrix}
	A_N^\transp P + P A_N + Q & P|D| & P|\Delta A| \\
	|D|^\transp P & -\rho I_r & 0\\
	|\Delta A|^\transp P & 0 & -\alpha I_p
	\end{bmatrix}
	X\\
	& - e^\transp Q e + \alpha |\ux^+ + \ox^-|^2 + \rho |\overline{\omega} - \underline{\omega}|^2
	\end{align*}
	with $X=\begin{bmatrix}
	e & \overline{\omega} - \underline{\omega} &  \ux^+ + \ox^-
	\end{bmatrix}^\transp$, for any $Q\in\Real^{p\times p}$, $\rho,\alpha\in\Real$ . 
	
	Moreover, it holds that $-\ux^+ -\ox^- \leq e \leq \ox^+ + \ux^-$, which implies $|\ux^+ + \ox^-| \leq 2 |e|$. Hence,
	\begin{align*}
	\dot{V} \leq {}& X^\transp
	\underbrace{
		\left[
		\begin{array}{cc|c}
		A_N^\transp P + P A_N + Q + 4\alpha I_p & P|D| & P|\Delta A| \\
		|D|^\transp P & -\rho I_r & 0\\
		\hline
		|\Delta A|^\transp P & 0 & -\alpha I_p
		\end{array}
		\right]}_{\Upsilon}
	X\\
	& - e^\transp Q e + \rho \|\overline{\omega} - \underline{\omega}\|_2^2
	\end{align*}
	
	Thus, if we had $\Upsilon \leq 0$, $Q>0$, $\rho > 0$, then we would have
	\[
	\dot{V} \leq -\mu V + \rho \|\overline{\omega} - \underline{\omega}\|_2^2
	\]
	with $\mu = \frac{\lambda_{\min}(Q)}{\lambda_{\max}(P)}$. Since $V(t_N) = 0$, this further implies that for all $t>t_N$, 
	\begin{equation}
	\label{eq:lyap-bound}
	V(t) \leq \frac{\rho}{\mu} C_\omega^2(t)
	\end{equation}
	
	We now examine the condition $\Upsilon \leq 0$.
	We resort to its Schur complement: given $\alpha > 0$, $\Upsilon \leq 0$ if and only if $R \geq S$, where $S= \alpha^{-1}\begin{bmatrix}|\Delta A|^\transp P & 0\end{bmatrix}^\transp \begin{bmatrix}|\Delta A|^\transp P & 0\end{bmatrix}$ and $R$ is the top-left block of $-\Upsilon$:
	\[R = \begin{bmatrix}
	-A_N^\transp P - P A_N - Q - 4\alpha I_p & -P|D|\\
	-|D|^\transp P & \rho I_r\\
	\end{bmatrix}\]
	
	Choose $Q = \frac{1}{2}Q_0-4\alpha I_p$.
	Assume that $P$ is fixed and satisfies the conditions of the lemma. We have 
	\begin{align*}
	\lambda_{\max}(S) &\leq \alpha^{-1}\lambda_{\max}(P)^2\lambda_{\max}(|\Delta A|^\top |\Delta A|)\\
	&\leq\alpha^{-1}\lambda_{\max}(P)^2\|\Delta A\|_F^2
	\end{align*}

	Thus, by taking $\alpha = \frac{2\lambda_{\max}(P)^2\|\Delta A\|_F^2}{\lambda_{\min}(Q_0)} = \cO({\frac{\beta_N(\delta)^2}{\lambda_{\min}(G_{N,\lambda})}})$, we can obtain that $S \leq \begin{bmatrix}
	\frac{1}{2}Q_0 & 0\\0 & 0
	\end{bmatrix}$. Thus,
	\[R-S \geq \begin{bmatrix}
	-A_N^\transp P - P A_N - Q_0 & -P|D|\\
	-|D|^\transp P & \rho I_r\\
	\end{bmatrix} > 0 \]
	as it is assumed in the conditions of the lemma. Hence, under such a choice of $\alpha$ and $Q$, we recover $\Upsilon\leq 0$. \eqref{eq:lyap-bound} follows with $\mu = \frac{\lambda_{\min}(Q)}{\lambda_{\max}(P)} = \frac{\frac{1}{2}\lambda_{\min}(Q_0) - 4\alpha}{\lambda_{\max}(P)}$.
	Finally, we obtain
	\begin{align*}
	\|e(t)\|_2^2 &\leq \lambda_{\min}(P)^{-1} V(t)\\
	& \leq \frac{2\rho\lambda_{\max}(P)/\lambda_{\min}(P)}{\lambda_{\min}(Q_0) - 8\alpha} C_\omega^2(t)\\
	\end{align*}
	Developing at the first order in $\alpha$ gives
	\begin{align*}
	\|e(t)\|_2 &\leq C_0\left(1 + \frac{4\alpha}{\lambda_{\min}(Q_0)} + \cO(\alpha^2)\right)C_\omega(t)\\
	&\leq \left(C_0 + \cO\left({\frac{\beta_N(\delta)^2}{\lambda_{\min}(G_{N,\lambda})}}\right)\right)C_\omega(t)
	\end{align*}
\end{proof}

Finally, we propagate the state prediction error bound to the pessimistic rewards and surrogate objective to get our final result.
\begin{proof}
	For any sequence of controls $\bu$, dynamical parameters $\theta\in C_{N,\delta}$ and disturbances $\underline{\bom} \leq \bom \leq \overline{\bom}$, we clearly have 
	\[V(\bu)^r \leq V(\bu) = \expectedvalue_{\bom}\sum_n \gamma^n R(x_n)\]
	
	Moreover, by the inclusion property \eqref{eq:inclusion-property}, we have that $\underline{x}_n \leq x_n \leq \overline{x}_n$, which implies that $R(x_n) \leq \max_{x\in[\underline{x}_n(\bu), \overline{x}_n(\bu)]}  R(x)$. Assuming $R$ is $L$-lipschitz,
	\begin{align*}
	V(\bu) - \hat{V}^r(\bu) &\leq \sum_{n=N+1}^\infty \gamma^n \underset{{x\in[\underline{x}_n(\bu), \overline{x}_n(\bu)]}}{(\max - \min)} R(x)\\
	&\leq \sum_{n=N+1}^\infty \gamma^n L \left\|\underline{x}_n(\bu) - \overline{x}_n(\bu)\right\|_2\\
	&\leq L\left(C_0 + \cO\left({\frac{\beta_N(\delta)^2}{\lambda_{\min}(G_{N,\lambda})}}\right)\right) \sum_{n>N} \gamma^n C_{\omega}(t_n)\\
	&= \Delta_\omega + \cO\left({\frac{\beta_N(\delta)^2}{\lambda_{\min}(G_{N,\lambda})}}\right)
	\end{align*}
	with $\Delta_\omega = L C_0\sum_{n>N} \gamma^n C_{\omega}(t_n)$, which is finite by \Cref{assumpt:gaussian-noise}.
	
	Finally, we use the result of \Cref{theorem:opd-regret} to account for planning with a finite budget, and relate $\hat{V}^r(a^\star)$ to $\hat{V}^r(a_K)$.
\end{proof}

\subsection{Proof of \Cref{cor:pe}}
\label{sec:proof-pe}

\begin{proof}
	By \eqref{eq:vector_rls} and \eqref{eq:pe}, we have $$\lambda_{\min}(G_{N,\lambda}) \geq (N-n_0)\underline{\phi}^2 + \sum_{n<n_0}\Phi_{n}^\transp\Sigma_{p}^{-1}\Phi_{n}$$
	
	and by \eqref{eq:confidence-ellipsoid},
	\begin{align*}
	\beta_N(\delta) &= \sqrt{2\log \left(\frac{\det(G_{N,\lambda})^{1/2}}{\delta\det(\lambda I_d)^{1/2}}\right)}
	+ (\lambda d)^{1/2}S\\
	&\leq \sqrt{\log \left(N^{d/2}\overline{\phi}^d / (\delta\lambda^{d/2})\right)} + \cO(1)
	\end{align*}
	Thus,
	\[
	\frac{\beta_N(\delta)^2}{\lambda_{\min}(G_{N,\lambda})} = \cO\left(\frac{\log (N^{d/2}/\delta)}{N} \right)
	\]
	
\paragraph{Stability condition 2.}

By \Cref{lem:dynamics-est-bound} and the above, the sequence $(A_N)_{N}$ converges to $A(\theta)$ in Frobenius norm. Thus, 
$$M_n \eqdef \begin{bmatrix}
A_N^\transp P + P A_N + Q_0 & P|D|  \\
|D|^\transp P & -\rho I_r \\
\end{bmatrix}\text{ also converges to } M\eqdef\begin{bmatrix}
A(\theta)^\transp P + P A(\theta) + Q_0 & P|D|  \\
|D|^\transp P & -\rho I_r \\
\end{bmatrix},$$ which is assumed to be negative definite.

Moreover, the two functions that map a matrix to its characteristic polynomial and a polynomial to its roots, are both continuous. Thus, by continuity, the largest eigenvalue of $M_n$ converges to that of $M$, which is strictly negative. Hence, there exists some $N_0\in\Natural$ such that for all $N>N_0$, $M_N$ is negative definite, as required in the condition 2. of \Cref{thm:minimax-regret-bound}.
\end{proof}

\subsection{Proof of \Cref{theorem:drop-regret}}

We start by showing the following lemma:

\begin{lemma}[Robust values ordering]
	In addition to the robust B-value defined in \eqref{eq:robust-b-values}, that we extend to inner nodes	
	\begin{equation}
	\label{eq:br}
	B_a^r(k)  \eqdef
	\begin{cases}
	\min_{m\in[M]} \sum_{n=0}^{h-1} \gamma^n R_n^m  + \frac{\gamma^h}{1-\gamma}&\text{if } a \text{ is a leaf;}\\
	\max_{b\in\mathcal{A}} B_{ab}^r(k) & \text{else.}
	\end{cases},
	\end{equation}
	
	we also define the robust value of a sequence of actions $a$
	\begin{equation}
	\label{eq:max_vr}
	V_a^r \eqdef \max_{\bu \in a\mathcal{A^\infty}} \min_{m\in[M]} \sum_{n=h(a)+1}^\infty \gamma^n R^m_n
	\end{equation}
	and the robust U-values of a sequence of action $a$
	\begin{equation}
	\label{eq:ur}
	U_a^r(K)  \eqdef
	\begin{cases}
	\min_{m\in[M]} \sum_{n=0}^{h-1} \gamma^n R_n^m &\text{if } a \text{ is a leaf;}\\
	\max_{b\in\mathcal{A}} U_{ab}^r(n) & \text{else.}
	\end{cases}
	\end{equation}
	
	Then, the robust values, U-values and B-values exhibit similar properties as the optimal values, U-values and B-values, that is: for all $0 < k < K$ and $a\in\mathcal{T}_T$,
	\begin{equation}
	U^r_a(k) \leq U^r_a(K) \leq V^r_a \leq B^r_a(K) \leq B^r_a(k)
	\end{equation}
	\label{lemma:uvb}
\end{lemma}
\begin{proof}
	By definition, when starting with sequence $a$, the value $U_a^m(k)$ represents the minimum admissible reward, while $B_a^m(k)$ corresponds to the best admissible reward achievable with respect to the the possible continuations of $a$. Thus, for all $a\in\mathcal{A}^*$, $U_a^m(k)$ and $U_a^r(k)$ are non-decreasing functions of $k$ and $B_a^m(k)$ and $B_a^r(k)$ are a non-increasing functions of $k$, while $V_a^m$ and $V_a^r$ do not depend on $k$.
	
	Moreover, since the reward function $R$ is assumed be bounded in $[0, 1]$, the sum of discounted rewards from a node of depth $d$ is at most $\gamma^d + \gamma^{d+1}+\dots = \frac{\gamma^d}{1-\gamma}$. As a consequence, for all $k \geq 0$ , $a\in\mathcal{L}_k$ of depth $d$, and any sequence of rewards $(R_n)_{n\in\mathbb{N}}$ obtained from following a path in $a\mathcal{A}^\infty$ with any dynamics $m \in [M]$:
	
	\begin{equation*}
	U^m_a(k) = \sum_{n=0}^{d-1} \gamma^n R_n^m \leq \sum_{n=0}^\infty \gamma^n R_n^m \leq \sum_{n=0}^{d-1} \gamma^n R_n^m + \frac{\gamma^d}{1-\gamma} = B^m_a(k) 
	\end{equation*}
	Hence,
	\begin{equation}
	\label{eq:min_m_values}
	\min_{m \in [M]} U^m_a(k) \leq \min_{m \in [M]} \sum_{n=0}^\infty \gamma^n R_n \leq \min_{m \in [M]} B^m_a(k)
	\end{equation}
	And as the left-hand and right-hand sides of \eqref{eq:min_m_values} are independent of the particular path that was followed in $a\mathcal{A}^\infty$, it also holds for the robust path:
	
	\begin{equation*}
	\min_{m \in [M]} U^m_i(k) \leq \max_{a'\in a\mathcal{A}^\infty} \min_{m \in [M]} \sum_{t=0}^\infty \gamma^n R_n^m \leq \min_{m \in [M]} B^m_i(k)
	\end{equation*}
	that is,
	\begin{equation}
	\label{eq:urvrbr}
	U^r_a(k) \leq V^r_a  \leq B^r_a(k)
	\end{equation}
	
	Finally, \eqref{eq:urvrbr} is extended to the rest of $\mathcal{T}_k$ by recursive application of \eqref{eq:max_vr}, \eqref{eq:ur} and \eqref{eq:br}.
\end{proof}

We now turn to the proof of the theorem.

\begin{proof}
	\citet{Hren2008} first show in Theorem 2 that the simple regret $r_K$ of their optimistic planner is bounded by $\frac{\gamma^{d_K}}{1 - \gamma}$ where $d_K$ is the depth of $\mathcal{T}_K$. This properties relies on the fact that the returned action belongs to the deepest explored branch, which we can show likewise by contradiction using Lemma \ref{lemma:uvb}. This yields directly that the returned action $a = i_0$ where $i$ is some node of maximal depth $d_K$ expanded at round $k\leq K$, which by selection rule verifies $B_a^r(k) = B_i^r(k) = \max_{x\in\mathcal{A}} B_x^r(k)$ and:
	\begin{align*}
	\label{eq:Rndn}
	V^r - V_a^r = V_{a^{\star}}^r - V_a^r  \leq B_{a^{\star}}^r(k) - V_a^r &\leq B_{a}^r(k) - U_a^r(k) \\
	&= B_{i}^r(k) - U_i^r(k) \\
	&= \frac{\gamma^{d_K}}{1-\gamma}.
	\end{align*}
	
	Secondly, they bound the depth $d_K$ of $\mathcal{T}_K$ with respect to $K$. To that end, they show that the expanded nodes always belong to the sub-tree $\mathcal{T}_\infty$ of all the nodes of depth $d$ that are $\frac{\gamma^d}{1-\gamma}$-optimal. Indeed, if a node $i$ of depth $d$ is expanded at round $k$, then $B_i^r(k) \geq B_j^r(k)$ for all $j\in \mathcal{L}_k$ by selection rule, thus the max-backups of \eqref{eq:robust-b-values} up to the root yield $B^r_i(k) = B_\emptyset^r(k)$. Moreover, by Lemma \ref{lemma:uvb} we have that $B_\emptyset^r(k) \geq V_\emptyset^r = V^r$ and so $V_i^r \geq U_i^r(k) = B_i^r(k) - \frac{\gamma^d}{1-\gamma} \geq V^r - \frac{\gamma^d}{1-\gamma}$, thus $i \in \mathcal{T}_\infty$.
	
	Then from the definition of $\kappa$ applied to nodes in $\mathcal{T}_\infty$, there exists $d_0$ and $c$ such that the number $n_d$ of nodes of depth $d \geq d_0$ in $\mathcal{T}_\infty$ is bounded by $c\kappa^d$. As a consequence, 
	\begin{eqnarray*}
		K &= \sum_{d=0}^{d_K} n_d = n_0 + \sum_{d=d_0+1}^{d_K} n_d \leq n_0 + c\sum_{d={d_0+1}}^{d_K} \kappa^d.
	\end{eqnarray*}
	
	\begin{itemize}
		\item If $\kappa > 1$, then $K \leq n_0 + c\kappa^{d_0+1}\frac{\kappa^{d_K-d_0}-1}{\kappa-1}$ and thus $d_K \geq d_0 + \log_\kappa \frac{(K-n_0)(\kappa - 1)}{c\kappa^{d_0+1}}$.
		
		We conclude that $r_K \leq \frac{\gamma^{d_K}}{1-\gamma} = \frac{1}{1-\gamma} \left( \frac{(K-n_0)(\kappa - 1)}{c\kappa^{d_0+1}} \right)^\frac{\log \gamma}{\log \kappa} = \cO\left(K^{-\frac{\log 1/\gamma}{\log \kappa}}\right)$.
		
		\item If $\kappa = 1$, then $K \leq n_0 + c(d_K-d_0)$, hence we have $r_K = O\left(\gamma^{Kc}\right)$.
	\end{itemize}
\end{proof}

\section{Experimental details}
\label{sec:experimental-setting}

In both experiments, we used $\gamma=0.9$,  $\delta=0.9$ and a planning budget $K=100$. The disturbances were sampled uniformly in $[-0.1, 0.1]^r$ while the measurements are Gaussian with covariance $\Sigma_s = 0.1^2 I_s$. 

\subsection{Obstacle Avoidance}

\paragraph{States}

The system is described by its position $(p_x,p_y)$ and velocity $(v_x, v_y)$:
\begin{equation*}
x = \begin{bmatrix}
p_x & p_y & v_x & v_y
\end{bmatrix}^\transp
\end{equation*}

\paragraph{Actions}

It is acted upon by means horizontal and vertical forces $u=(u_x, u_y)\in [-1,1]^2$.
We discretise the action space into four constant controls, for each direction:
\begin{equation*}
\cA = \{(-1, -1), (-1, 1), (1, -1), (1, 1)\}
\end{equation*}

\paragraph{Reward}

The reward encodes the task of navigating to reach a goal state $x_g$ while avoiding collisions with obstacles: $$R(x) = \delta(x)/(1 + \|x - x_g\|_2),$$  where $\delta(x)$ is $0$ whenever $x$ collides with an obstacle, $1$ otherwise.

\paragraph{Dynamics}
The system dynamics consist in a double integrator, with friction parameters $(\theta_x, \theta_y)$:
$$
\begin{bmatrix}
\dot{p_x}\\
\dot{p_y}\\
\dot{v_x}\\
\dot{v_y}\\
\end{bmatrix} = 
\begin{bmatrix}
0 & 0 & 1 & 0 \\
0 & 0 & 0 & 1 \\
0 & 0 & -\theta_x & 0 \\
0 & 0 & 0 & -\theta_y
\end{bmatrix}
\begin{bmatrix}
{p_x}\\
{p_y}\\
{v_x}\\
{v_y}\\
\end{bmatrix}
+
\begin{bmatrix}
0\\
0\\
{u_x}\\
{u_y}\\
\end{bmatrix}.
$$

Note that \Cref{assumpt:metzler} is always verified. 

\paragraph{DQN baseline}

In addition to the state, knowledge of the obstacles is encoded in the observation as an angular grid of laser-like distance measurements, as well as the goal location relative to the system position.
As a model for the $Q$-function, we used a Multi-Layer Perceptron with two hidden layers of size $100$. An $\epsilon$-greedy strategy was used for exploration.

\subsection{Autonomous Driving}

In the following, we describe the structure of the dynamical system $f$ representing the couplings and interactions between several vehicles.

\paragraph{States}

In addition to the ego-vehicle, the scene contains $V$ other vehicles. Any vehicle $i\in[0,V]$ is represented by its position $(x_i, y_i)$, its forward velocity $v_i$ its heading $\psi_i$. The resulting joint state is the traffic description: $x = (x_i, y_i, v_i, \psi_i)_{i\in[0,V]}\in\Real^{4V+4}$.

\paragraph{Actions}

The ego-vehicle is following a fixed path, and the tasks consists in adapting its velocity by means of three actions $\cA = \{$faster, constant velocity, slower$\}$. They are achieved by a longitudinal linear controller that tracks the desired velocity $v_0$, as described below in the system dynamics.

\paragraph{Reward}

The reward function $R$ is the following:
\[
R(x) = 
\begin{cases}
1 & \text{if the ego-vehicle is at full velocity;}\\
0 & \text{if the ego-vehicle has collided with another vehicle;}\\
0.5 & \text{else.}
\end{cases}\]

\paragraph{Dynamics}

The kinematics of any vehicle $i\in[V]$ are represented by the Kinematic Bicycle Model:
\begin{align}
\dot{x}_i &= v_i\cos(\psi_i), \nonumber\\
\dot{y}_i &= v_i\sin(\psi_i), \nonumber\\
\dot{v}_i &= a_i, \nonumber\\
\dot{\psi}_i &= \frac{v_i}{l}\sin(\beta_i), \nonumber
\end{align}
where $(x_i, y_i)$ is the vehicle position, $v_i$ is its forward velocity and $\psi_i$ is its heading, $l$ is the vehicle half-length, $a_i$ is the acceleration command and $\beta_i$ is the slip angle at the centre of gravity, used as a steering command.

\paragraph{Longitudinal dynamics}
Longitudinal behaviour is modelled by a linear controller using three features: a desired velocity, a braking term to drive slower than the front vehicle, and a braking term to respect a safe distance to the front vehicle.

Denoting $f_i$ the index of the front vehicle preceding vehicle $i$, the acceleration command can be presented as follows:
\begin{equation*}
a_i = \begin{bmatrix}
\theta_{i,1} & \theta_{i,2} & \theta_{i,3}
\end{bmatrix} \begin{bmatrix}
v_0 - v_i \\
-(v_{f_i}-v_i)^- \\
-(x_{f_i} - x_i - (d_0 + v_iT))^- \\
\end{bmatrix},
\label{eq:theta_a}
\end{equation*}
where $v_0, d_0$ and $T$ respectively denote the speed limit, jam distance and time gap given by traffic rules.

\paragraph{Lateral dynamics}

The lane $L_i$ with the lateral position $y_{L_i}$ and heading $\psi_{L_i}$ is tracked by a cascade controller of lateral position and heading $\beta_i$, which is selected in a way the closed-loop dynamics take the form:

\begin{align}
\label{eq:heading-command}
\dot{\psi}_i &= \theta_{i,5}\left(\psi_{L_i}+\sin^{-1}\left(\frac{\tilde{v}_{i,y}}{v_i}\right)-\psi_i\right),\\
\tilde{v}_{i,y} &= \theta_{i,4} (y_{L_i}-y_i). \nonumber
\end{align}
We assume that the drivers choose their steering command $\beta_i$ such that \eqref{eq:heading-command} is always achieved: $\beta_i = \sin^{-1}(\frac{l}{v_i}\dot{\psi}_i)$.

\paragraph{LPV formulation}

The system presented so far is non-linear and must be cast into the LPV form. We approximate the non-linearities induced by the trigonometric operators through equilibrium linearisation around $y_i=y_{L_i}$ and $\psi_i=\psi_{L_i}$.

This yields the following longitudinal dynamics:
\begin{align*}
\dot{x}_i &= v_i,\\
\dot v_i &= \theta_{i,1} (v_0 - v_i) + \theta_{i,2} (v_{f_i} - v_i) + \theta_{i,3}(x_{f_i} - x_i - d_0 - v_i T),
\end{align*}
where $\theta_{i,2}$ and $\theta_{i,3}$ are set to $0$ whenever the corresponding features are not active.

It can be rewritten in the form $$\dot{X} = A(\theta)(X-X_c) + \omega.$$ For example, in the case of two vehicles only:
\begin{equation*}
X = \begin{bmatrix}
x_i \\
x_{f_i} \\
v_i \\
v_{f_i} \\
\end{bmatrix}
,\quad
X_c = \begin{bmatrix}
-d_0-v_0 T \\
0 \\
v_0\\
v_0 \\
\end{bmatrix}
,\quad
\omega = \begin{bmatrix}
v_0 \\
v_0 \\
0\\
0\\
\end{bmatrix}
\end{equation*}

\begin{equation*}
A(\theta)
=
\begin{blockarray}{ccccc}
& i & f_i & i & f_i \\
\begin{block}{c[cccc]}
i & 0 & 0 & 1 & 0 \\
f_i & 0 & 0 & 0 & 1 \\
i & -\theta_{i,3} & \theta_{i,3} & -\theta_{i,1}-\theta_{i,2}-\theta_{i,3} & \theta_{i,2} \\
f_i & 0 & 0 & 0 & -\theta_{f_i,1} \\
\end{block}
\end{blockarray}
\end{equation*}

The lateral dynamics are in a similar form:
\begin{equation*}
\begin{bmatrix}
\dot{y}_i \\
\dot{\psi}_i \\
\end{bmatrix}
=
\begin{bmatrix}
0 & v_i \\
-\frac{\theta_{i,4} \theta_{i,5}}{v_i} & -\theta_{i,5}
\end{bmatrix}
\begin{bmatrix}
y_i - y_{L_i} \\
\psi_i - \psi_{L_i}
\end{bmatrix}
+
\begin{bmatrix}
v_i\psi_{L_i} \\
0
\end{bmatrix}
\end{equation*}
Here, the dependency in $v_i$ is seen as an uncertain parametric dependency, \emph{i.e.} $\theta_{i,6}=v_i$, with constant bounds assumed for $v_i$ using an overset of the longitudinal interval predictor.

\paragraph{Change of coordinates}
In both cases, the obtained polytope centre $A_N$ is non-Metzler.
We use the similarity transformation of coordinates of \citet{Efimov2013}. Precisely, we choose $\Theta$ such that for any $\theta\in\Theta$, $A(\theta)$ is always diagonalisable with real eigenvalues, and perform an eigendecomposition to compute its change of basis matrix $Z$. The transformed system $X'=Z^{-1}(X-X_c)$ verifies \eqref{eq:confidence} with $A_N$ Metlzer as required to apply the interval predictor of \Cref{prop:predictor}. Finally, the obtained predictor is transformed back to the original coordinates $Z$ by using the following lemma:
\begin{lemma}[Interval arithmetic of \citealt{Efimov2012}]
	\label{lem:interval} Let $x\in\mathbb{R}^{n}$ be a vector variable, $\underline{x}\le x\le\overline{x}$ for some $\underline{x},\overline{x}\in\mathbb{R}^{n}$. 
	
	\begin{enumerate}
		\item If $A\in\Real^{m\times n}$ is a constant matrix, then
		\begin{equation}
		A^{+}\underline{x}-A^{-}\overline{x}\le Ax\le A^{+}\overline{x}-A^{-}\underline{x}.\label{eq:Interval1}
		\end{equation}
		\item If $A\in\Real^{m\times n}$ is a matrix variable and \textup{$\underline{A}\le A\le\overline{A}$} for some $\underline{A},\overline{A}\in\Real^{m\times n}$, then
		\begin{gather}
		\underline{A}^{+}\underline{x}^{+}-\overline{A}^{+}\underline{x}^{-}-\underline{A}^{-}\overline{x}^{+}+\overline{A}^{-}\overline{x}^{-}\leq Ax\label{eq:Interval2}\\
		\leq\overline{A}^{+}\overline{x}^{+}-\underline{A}^{+}\overline{x}^{-}-\overline{A}^{-}\underline{x}^{+}+\underline{A}^{-}\underline{x}^{-}.\nonumber 
		\end{gather}
	\end{enumerate}
\end{lemma}

\paragraph{DQN baseline}

In order to avoid discontinuities in the vehicles headings, the state is encoded as $x = (x_i, y_i, v^x_i, v^y_i, \cos\psi_i, \sin\psi_i)_{i\in[0,V]}\in\Real^{6V+6}$, with the ego-vehicle always in the first position.
As a model for the $Q$-function, we used the Social Attention architecture from \citep{leurent2019social}, that allows to support an arbitrary number of vehicles as input and enforce an invariance to their order.

\section{A tighter conversion from ellipsoid to polytope}
\label{sec:tight-polytope}
\begin{lemma}[Confidence polytope]
	\label{lem:tight_polytope}
	We can enclose the confidence ellipsoid obtained in $\eqref{eq:confidence-ellipsoid}$ within a polytope $C_\delta$:
	\begin{equation}
	\cC_\delta = \left\{ A_{1}+\sum_{i=1}^{2^d}\lambda_{i}\Delta A_{i}: \lambda\in[0, 1]^{2^d},  \sum_{i=1}^{2^d}\lambda_{i}=1\right\}.
	\end{equation}
	with 
	\begin{align*}
	&h_k \text{ is the }k^\text{th}\text{ element of }\{-1,1\}^d\text{ for } k\in[2^d],\\
	&G_{N,\lambda} = PDP^{-1}, \quad \Delta\theta_k = \beta_{N}(\delta)P^{-1}D^{-1/2} h_k, \\
	&A_N = A(\theta_{N,\lambda}), \quad \Delta A_k = \Delta\theta_k^\transp\Phi.
	\end{align*}
\end{lemma}

\begin{proof}
	The ellipsoid in \eqref{eq:confidence-ellipsoid} is described by:
	\begin{align*}
	\theta\in\cC_\delta &\implies
	(\theta-\theta_{N,\lambda})^\transp G_{N,\lambda}(\theta-\theta_{N,\lambda}) \leq \beta_{N}(\delta)^2\\
	&\implies (\theta'-\theta'_{N,\lambda})^\transp D (\theta'-\theta'_{N,\lambda}) \leq \beta_{N}(\delta)^2\\
	&\implies \sum_{i=1}^d D_{i,i}(\theta'_i-\theta'_{N,\lambda,i})^2\leq \beta_{N}(\delta)^2\\
	&\implies\forall i, |\theta'_i-\theta'_{N,\lambda,i}|\leq D_{i,i}^{-1/2}\beta_{N}(\delta)
	\end{align*}
	This describes a $\Real^d$ box containing $\theta' = P\theta$, whose $k^\text{th}$ vertex is represented by $\theta_{N,\lambda}' + \beta_{N}(\delta)D^{-1/2} h_k$. We obtain the corresponding box on $\theta$ by transforming each vertex of the box with $P^{-1}$.
\end{proof}

%\begin{figure}
%	\centering
%	\includegraphics[trim={3.8cm, 2cm, 5cm, 3.8cm}, clip, width=0.7\linewidth]{img/ellipsoid_to_polytope}
%	\caption{From the confidence ellipsoid $\cC_\delta$ to its enclosing polytope $\cP_\delta$}
%	\label{fig:ellipsoid_to_polytope}
%\end{figure}

\section{On the ordering of min and max}
	\label{sec:min-max-order}

	In the definition of $B_{a}^{r}(k)$ \eqref{eq:br} and $U_{a}^{r}(k)$ \eqref{eq:ur} it is essential that the minimum over the models is only taken at the end of trajectories, in the same way as for the robust objective \eqref{eq:robust-objective-discrete} in which the worst-case dynamics is only determined after the action sequence has been fully specified. Assume that $U_{a}^{r}(k)$ is instead naively defined as:
	
	\[
	U_{a}^{r}(k)=\min_{m\in[1,M]}U_{a}^{m}(k),
	\]
	
	This would not recover the robust policy, as we show in \Cref{fig:min-max-order} with a simple counter-example.
	\begin{figure}[htp]
		\centering
		\includegraphics[width=\linewidth]{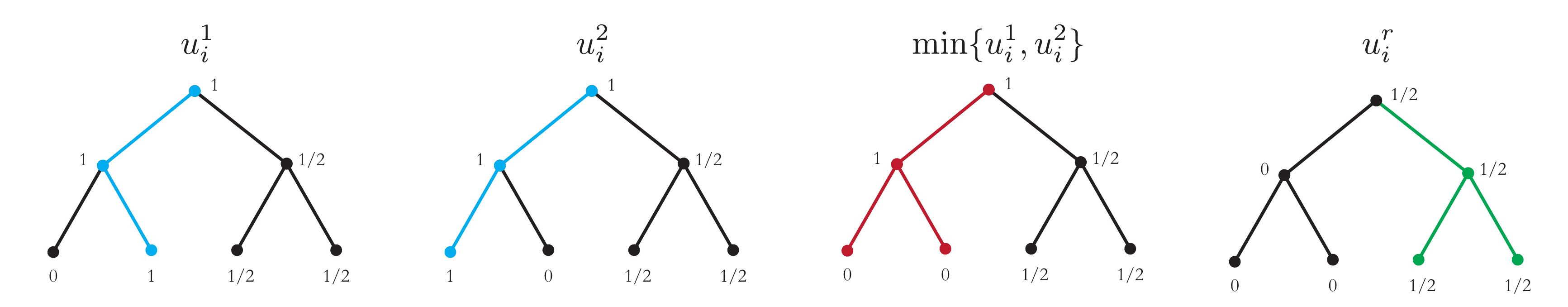}
		\caption{From left to right: two simple models and corresponding u-values with optimal sequences in blue; the naive version of the robust values returns sub-optimal paths in red; our robust U-value properly recovers the robust policy in green.}
		\label{fig:min-max-order}
	\end{figure}
%\end{comment}
\end{document}